\title[Algorithms for adversarial linear contextual bandits]{Efficient and robust algorithms for 
\\adversarial linear contextual bandits}
 \newcommand{\X}{\mathcal{X}}
\newcommand{\F}{\mathcal{F}}
\newcommand{\real}{\mathbb{R}}
\newcommand{\Dw}{\mathcal{D}}
\newcommand{\trace}[1]{\mbox{tr}\left(#1\right)}
\newcommand{\II}[1]{\mathbb{I}_{\left\{#1\right\}}}
\newcommand{\PP}[1]{\mathbb{P}\left[#1\right]}
\newcommand{\EE}[1]{\mathbb{E}\left[#1\right]}
\newcommand{\EEb}[1]{\mathbb{E}\bigl[#1\bigr]}
\newcommand{\EEtb}[1]{\mathbb{E}_t\bigl[#1\bigr]}
\newcommand{\PPt}[1]{\mathbb{P}_t\left[#1\right]}
\newcommand{\EEt}[1]{\mathbb{E}_t\left[#1\right]}
\newcommand{\EEcc}[2]{\mathbb{E}\left[\left.#1\right|#2\right]}
\newcommand{\EEcct}[2]{\mathbb{E}_t\left[\left.#1\right|#2\right]}
\def\argmin{\mathop{\mbox{ arg\,min}}}
\newcommand{\ra}{\rightarrow}
\newcommand{\siprod}[2]{\langle#1,#2\rangle}
\newcommand{\iprod}[2]{\left\langle#1,#2\right\rangle}
\newcommand{\biprod}[2]{\bigl\langle#1,#2\bigr\rangle}
\newcommand{\norm}[1]{\left\|#1\right\|}
\newcommand{\twonorm}[1]{\norm{#1}_2}
\newcommand{\opnorm}[1]{\norm{#1}_{\text{op}}}
\newcommand{\ev}[1]{\left\{#1\right\}}
\newcommand{\pa}[1]{\left(#1\right)}
\newcommand{\bpa}[1]{\bigl(#1\bigr)}
\newcommand{\wh}{\widehat}
\newcommand{\wt}{\widetilde}
\newcommand{\loss}{\ell}
\newcommand{\hloss}{\wh{\loss}}
\newcommand{\lambdamin}{\lambda_{\min}}
\newcommand{\hR}{\wh{R}}
\newcommand{\tX}{X_0}
\newcommand{\htheta}{\wh{\theta}}
\newcommand{\hTheta}{\wh{\Theta}}
\newcommand{\ttheta}{\wt{\theta}}
\newcommand{\Sp}{\Sigma^+}
\newcommand{\Spt}{\Sigma^{+2}}
\newcommand{\transpose}{^\mathsf{\scriptscriptstyle T}}
\definecolor{PalePurp}{rgb}{0.66,0.57,0.66}
\newcommand{\expexpexp}{\textsc{Exp3}\xspace}
\newcommand{\linexp}{\textsc{LinExp3}\xspace}
\newcommand{\linexpreal}{\textsc{RealLinExp3}\xspace}
\newcommand{\linexprobust}{\textsc{RobustLinExp3}\xspace}
\newcommand{\linucb}{\textsc{LinUCB}\xspace}
\begin{document}

\maketitle

\begin{abstract}%
  We consider an adversarial variant of the classic $K$-armed linear contextual bandit problem 
where the sequence of loss functions associated with each arm are allowed to change without 
restriction over time. Under the assumption that the $d$-dimensional contexts are generated 
i.i.d.~at random from a known distribution, we develop computationally efficient algorithms based 
on the classic \expexpexp algorithm. Our first algorithm, \linexpreal, is shown to achieve a
regret guarantee of $\wt{O}(\sqrt{KdT})$ over $T$ rounds, which matches the best known lower 
bound for this problem. Our second algorithm, \linexprobust, is shown to be robust to misspecification, 
in that it achieves a regret bound of $\wt{O}((Kd)^{1/3}T^{2/3}) + \varepsilon \sqrt{d} T$ if the 
true reward function is linear up to an additive nonlinear error uniformly bounded in absolute 
value by $\varepsilon$. To our knowledge, our performance guarantees constitute the very first 
results on this problem setting.
\end{abstract}

\begin{keywords}%
  Contextual bandits, adversarial bandits, linear contextual bandits%
\end{keywords}

\section{Introduction}\label{sec:intro}
The contextual bandit problem is one of the most important sequential decision-making problems 
studied in the machine learning literature. Due to its ability to account for contextual 
information, the applicability of contextual bandit algorithms is far superior to that of standard 
multi-armed bandit methods: the framework of contextual bandits can be used to address a 
broad range of important and challenging real-world decision-making problems such as sequential 
treatment allocation \citep{TewariM17}  and online advertising \citep{LCLS10}. 
On the other hand, the framework is far less complex than that of general 
reinforcement learning, which allows for proving formal performance guarantees under relatively 
mild assumptions. As a result, there has been significant interest in this problem within the 
learning-theory community, resulting in a wide variety of algorithms with performance guarantees 
proven under a number of different assumptions. In the present paper, we fill a 
gap in this literature and design computationally efficient algorithms with strong 
performance guarantees for an adversarial version of the \emph{linear contextual bandit} problem.

Perhaps the most well-studied variant of the contextual bandit problem is that of \emph{stochastic 
linear contextual bandits} \citep{Aue02, RT10, Lihong1, NIPS2011_4417, LSz17}. 
First proposed by \citet{AL99}, this version supposes that the loss 
of each action is a fixed linear function of the vector-valued context, up to some zero-mean noise. 
Most algorithms designed for this setting are based on some variation of the ``optimism in the face 
of uncertainy'' principle championed by \citet{Aue02,auer2002finite}, or more generally by an 
appropriate exploitation of the concentration-of-measure phenomenon \citep{BoLuMa13}.
By now, this problem setting is very well-understood in many respects: there exist several 
computationally efficient, easy-to-implement algorithms achieving near-optimal worst-case 
performance guarantees \citep{NIPS2011_4417,AG13b}. These algorithms can be even
adapted to more involved loss models like generalized linear models, Gaussian processes, or 
very large structured model classes while retaining their performance guarantees 
\citep{filippi10genlin,SKKS09,CCLVR19,FKL19}. 
That said, most algorithms for stochastic linear contextual 
bandits suffer from the limitation that they are sensitive to \emph{model misspecification}: their 
performance guarantees become void as soon as the true loss functions deviate from the postulated 
linear model to the slightest degree. This issue has very recently attracted quite some attention 
due to the work of \citet{DKWY19}, seemingly implying that learning an $\varepsilon$-optimal policy 
in a contextual bandit problem has an extremely large sample complexity when assuming that the 
linear model is $\varepsilon$-inaccurate (defined formally later in our paper). This claim 
was quickly countered by \citet{VRD19} and \citet{LSz19}, who both showed that learning a 
(somewhat worse) $\varepsilon\sqrt{d}$-optimal policy is feasible with the very same sample 
complexity as learning a near-optimal policy in a well-specified linear model. Yet, since 
algorithms that are currently known to enjoy these favorable guarantees are quite complex, 
there is much work left to be done in designing practical algorithms with strong 
guarantees under model misspecification. This is one of the main issues we address in this paper.

Another limitation of virtually all known algorithms for linear contextual bandits is that they 
crucially rely on assuming that the loss function is \emph{fixed during the learning procedure}
\footnote{Or make other stringent assumptions about the losses, such as supposing that their total variation is 
bounded---see, e.g., \citet{CSZ19,RVC19,kim2019nearoptimal}.}.
This is in stark contrast with the literature on multi-armed (non-contextual) bandits, where there 
is a rich literature on both stochastic bandit models assuming i.i.d.~rewards and adversarial 
bandit models making no assumptions on the sequence of loss functions---see \citet{bubeck12survey} 
and \citet{LSz19book} for an excellent overview of both lines of work. Our main contribution in the 
present paper is addressing this gap by designing and analyzing algorithms that are guaranteed to 
work for arbitrary sequences of loss functions. 
While it is tempting to think that the our bandit problem can be directly 
addressed by a minor adaptation of algorithms developed for adversarial linear bandits, this is 
unfortunately not the case: all algorithms developed for such problems require a \emph{fixed 
decision set}, whereas reducing the linear contextual bandit problem to a linear bandit problem 
requires the use \emph{decision sets that change as a function of the contexts} 
\citep[Section~18]{LSz19book}. As a crucial step in our analysis, we will
assume that the contexts are generated in an i.i.d.~fashion and that the loss function in each round 
is statistically independent from the context in the same round. This assumption will allow us to 
relate the contextual bandit problem to a set of auxiliary bandit problems with a 
\emph{fixed action sets}, and reduce the scope of the analysis to these auxiliary 
problems.

Our main results are the following. We consider a $K$-armed linear contextual bandit problem with $d$-dimensional contexts where in each round, a loss function mapping contexts and actions to real numbers is chosen by an adversary in a sequence of $T$ rounds. The aim of the learner is to minimize its regret, defined as the gap between the total incurred by the learner and that of the best decision-making policy $\pi^*$ fixed in full knowledge of the loss sequence. 
We consider two different assumptions on the loss function. 
Assuming that the loss functions selected by the adversary are all linear, we propose an algorithm achieving a regret bound of order $\sqrt{KdT}$, which is known to be minimax optimal even in the simpler case of i.i.d.~losses (cf.~\citealp{Lihong1}). 
Second, we consider loss functions that are ``nearly linear'' up to an additive nonlinear function uniformly bounded by 
$\varepsilon$. For this case, we design an algorithm that guarantees regret bounded by $(Kd)^{1/3}T^{2/3} + \varepsilon 
\sqrt{d} T$. Notably, these latter bounds hold against \emph{any} class of policies and the $\varepsilon \sqrt{d} T$ 
overhead paid for nonlinearity is optimal when $K$ is large \citep{LSz19}. Both algorithms are computationally 
efficient, but require some prior knowledge to the distribution of the contexts.

There exist numerous other approaches for contextual bandit problems that do not rely on modeling 
the loss functions, but rather make use of a class of \emph{policies} that map contexts to actions. 
Instead of trying to fit the loss functions, these approaches aim to identify the best policy in 
the class. A typical assumption in this line of work is that one has access to a computational 
oracle that can perform various optimization problems over the policy class (such as returning an 
optimal policy given a joint distribution of context-loss pairs for each action). Given access to 
such an oracle, there exist algorithms achieving near-optimal performance guarantees when the loss 
function is fixed \citep{DHKKLRZ11,AHKLLS14}. More relevant to our present work are the works of \citet{RS16} 
and \citet{SKS16,SLKS16} who propose efficient algorithms with guaranteed performance for 
adversarial loss sequences and i.i.d.~contexts. Unlike the algorithms we present in this paper, 
these methods fail to guarantee optimal performance guarantees of order $\sqrt{T}$. 
Yet another line of work considers optimizing surrogate losses, where achieving regret of order 
$\sqrt{T}$ is indeed possible, with the caveat that the bounds only hold for the surrogate loss \citep{KSST08,BOZ17,FK18}.

The rest of the paper is organized as follows. After defining some basic notation, Section~\ref{sec:prelim} presents our problem formulation and states our assumptions. We present our algorithms and main results in Section~\ref{sec:main} and provide the proofs in Section~\ref{sec:analysis}. Section~\ref{sec:conc} concludes the paper by discussing some implications of our results and posing some open questions for future study.

\paragraph{Notation.} We use $\iprod{\cdot}{\cdot}$ to denote inner products in Euclidean space and by $\twonorm{\cdot}$ we denote the Euclidean norm. 
For a symmetric positive semidefinite matrix $A$, we use $\lambdamin(A)$ to denote its smallest eigenvalue. We use $\opnorm{A}$ to denote the operator norm of $A$ and we write $\trace{A}$ for the trace of a matrix $A$. Finally, we use $A \succcurlyeq 0$ to denote that an operator A is positive semi-definite, and we use $A \succcurlyeq B$ to denote $A - B \succcurlyeq 0$.

\section{Preliminaries}
\label{sec:prelim}
We consider a sequential interaction scheme between a \emph{learner} and its 
\emph{environment}, where the following steps are repeated in a sequence of rounds $t=1,2,\dots,T$:
\begin{enumerate}
 \item For each action $a=1,2,\dots,K$, the environment chooses a loss vector 
$\theta_{t,a}\in\real^d$,
 \item independently of the choice of loss vectors, the environment draws the context vector 
$X_t\in\real^d$ from the context distribution $\Dw$, and reveals it to the learner,
 \item based on $X_t$ 
 and possibly some randomness, the learner chooses action $A_t\in[K]$,
 \item the learner incurs and observes loss $\ell_{t}(X_t,A_t) = \iprod{X_t}{\theta_{t,A_t}}$.
\end{enumerate}
The goal of the learner is to pick its actions in a way that its total loss is as small as 
possible. Since we make no statistical assumptions about the sequence of losses (and in fact 
we allow them to depend on all the past interaction history), the learner cannot 
actually hope to incur as little loss as the best sequence of actions. A more reasonable aim is to 
match the performance of the \emph{best fixed policy} that maps contexts to actions in a static 
way. Formally, the learner will consider the set $\Pi$ of all policies $\pi:\real^d\ra[K]$, and aim 
to minimize its \emph{total expected regret} (or, simply, \emph{regret}) defined as
\[
 R_T = \max_{\pi\in\Pi} \EE{\sum_{t=1}^T \bpa{\ell_{t}(X_t,A_t) - \ell_{t}(X_t,\pi(X_t))}} = 
 \max_{\pi\in\Pi} \EE{\sum_{t=1}^T \iprod{X_t}{\theta_{t,A_t} - \theta_{t,\pi(X_t)}}},
\]
where the expectation is taken over the randomness injected by the learner, as well as the 
sequence of random contexts. For stating many of our technical results, it will be useful to define 
the filtration $\mathcal{F}_t = \mathcal{\sigma} ( X_s,  A_s , \forall s\le t )$ and the  notations
$\EEt{ \cdot} = \EE{ \cdot| \mathcal{F}_{t-1}}$ and $\PPt{ \cdot} = \PP{ \cdot| \mathcal{F}_{t-1}}$.
We will also often make use of a \emph{ghost sample} $X_0\sim \Dw$ drawn independently from the 
entire interaction history $\F_T$ for the sake of analysis. For instance, we can immediately show 
using this technique that for any policy $\pi$, we have 
\[
\EE{\iprod{X_t}{\theta_{t,\pi(X_t)}}} = \EE{\EEt{\iprod{X_t}{\theta_{t,\pi(X_t)}}}} = 
\EE{\EEt{\iprod{X_0}{\theta_{t,\pi(X_0)}}}} = \EE{\iprod{X_0}{\EE{\theta_{t,\pi(X_0)}}}},
\]
where the last expectation emphasizes that the loss vector $\theta_{t,a}$ may depend on the past 
random contexts and actions. 
This in turn can be used to show
\begin{align*}
 \EE{\sum_{t=1}^T \iprod{X_t}{\theta_{t,\pi(X_t)}}}  = \EE{\sum_{t=1}^T 
\iprod{X_0}{\EE{\theta_{t,\pi(X_0)}}}} \ge  \EE{\min_a \sum_{t=1}^T \iprod{X_0}{\EE{\theta_{t,a}}}},
\end{align*}
so the optimal policy $\pi_T^*$ that the learner compares itself to is the one defined through the 
rule
\begin{equation}\label{best_policy}
	\pi_T^*(x) = \argmin_a \sum_{t=1}^T \iprod{x}{\EE{\theta_{t,a}}}\qquad\qquad (\forall x\in\real^d).
\end{equation}
We will refer to policies of the above form as \emph{linear-classifier policies} and are defined 
through the rule $\pi_\theta(x) = \argmin_{a} \iprod{x}{\theta_a}$ for any collection of 
parameter vectors $\theta\in\real^{K\times d}$. We will also rely on the notion of \emph{stochastic 
policies} that assign probability distributions over the action set to each state, and use $\pi(a|x)$ to denote 
the probability that the stochastic policy $\pi$ takes action $a$ in state $x$.

Our analysis will rely on the following assumptions. We will suppose 
the context distribution is supported on the bounded set $\X$ with each $x\in \X$ satisfying 
 $\twonorm{x} \le \sigma $ for some $\sigma>0$, and also that $\twonorm{\theta_{t,a}} \le R$ for some positive $R$ for all $t,a$. 
Additionally, we suppose that the loss function is bounded by one in absolute value:
  $\big|\loss_{t}(x,a) \big| \le 1$ for all $t$, $a$ and all $x \in \X$. 
We will finally assume that the covariance matrix of the contexts $\Sigma = \EE{X_tX_t\transpose}$ is positive 
definite with its smallest eigenvalue being $\lambdamin > 0$. 

\section{Algorithms and main results}\label{sec:main}
Our main algorithmic contribution is a natural adaptation of the classic 
\expexpexp algorithm of \citet{auer2002finite} to the linear contextual bandit setting. 
The key idea underlying our method is to design a 
suitable estimator of the loss vectors and use these estimators to define a policy for the learner 
as follows: letting $\htheta_{t,a}$ be an estimator of the true loss vector $\theta_{t,a}$ and 
their cumulative sum $\hTheta_{t,a} = \sum_{k=1}^t \htheta_{k,a}$, our algorithm will base its 
decisions on the values $\langle X_t, \hTheta_{t-1,a}\rangle$ serving as estimators of 
the cumulative losses $\iprod{X_t}{\Theta_{t-1,a}} = \sum_{k=1}^{t-1} \iprod{X_t}{\theta_{k,a}}$. The algorithm then uses these values in an 
exponential-weights-style algorithm and plays action $a$ with probability proportional to 
$\exp\bpa{-\eta \langle X_t, \hTheta_{t-1,a}\rangle}$, where $\eta>0$ is a 
\emph{learning-rate} 
parameter. We present a general version of this method as Algorithm~\ref{alg:linexp3}. As a tribute to
the \linucb algorithm, a natural extension of the classic UCB algorithm to linear 
contextual bandits, we refer to our algorithm as \linexp.

\begin{algorithm}[H]
	\caption{\linexp}
	\label{alg:linexp3}
	\textbf{Parameters:} Learning rate $\eta>0$, exploration parameter $\gamma \in (0,1)$, $\Sigma $\\
	\textbf{Initialization:} Set $\theta_{0,i} = 0$ for all $i\in[K]$.
	\\
	\textbf{For} $t = 1, \dots, T$, \textbf{repeat:}
	\begin{enumerate}
		\item Observe $X_t$ and, for all $a$, set 
		\[ 
            w_{t}(X_t,a) = \exp\pa{- \eta  \sum_{s=0}^{t-1} \langle X_t, \htheta_{s,a}\rangle},
        \]
		\item draw $A_t$ from the policy defined as
		\[
            \pi_{t}\left(a\middle|X_t\right)= (1-\gamma) \frac{w_{t}(X_t,a)}{\sum_{a'} 
w_{t}(X_t,a')} + \frac{\gamma}{K},
		\]
		\item observe the loss $\loss_t(X_t,A_t)$ and compute $\htheta_{t,a}$ for all $a$.
	\end{enumerate}
\end{algorithm}
As presented above, \linexp is more of a template than an actual algorithm since it does not 
specify the loss estimators $\htheta_{t,a}$. Ideally, one may want to use \emph{unbiased} 
estimators that satisfy $\EEb{\htheta_{t,a}} = \theta_{t,a}$ for all $t,a$. Our key contribution is 
designing two different (nearly) unbiased estimators that will allow us to prove performance 
guarantees of two distinct flavors. Both estimators are efficiently computable, but require
some prior knowledge the context distribution $\Dw$. In what follows, we describe the two variants of \linexp based on 
the two estimators and 
state the corresponding performance guarantees, and relegate the proof sketches to 
Section~\ref{sec:analysis}. We also present two simple variants of our algorithms that work with 
various degrees of full-information feedback in Appendix~\ref{sec:fullinfo}.

\subsection{Algorithm for nearly-linear losses: \linexprobust}\label{sec:alg_robust}
We begin by describing the simpler one of our two algorithms, which will be seen to be robust to 
misspecification of the linear loss model. We will accordingly refer to this algorithm as 
\linexprobust. Specifically, we suppose in this section
that $\loss_t(x,a) = \iprod{x}{\theta_{t,a}} + \varepsilon_{t}(x,a)$, where $\varepsilon_{t}(x,a): 
 \real^d \times K  \to \real$ is an arbitrary nonlinear function satisfying $|\varepsilon_{t}(x,a)| \le \varepsilon$ for 
all $t$, $x$ and $a$ and some $\varepsilon>0$.
Also supposing that we have perfect knowledge of the covariance matrix $\Sigma$,
we define the loss estimator used by \linexprobust for all actions $a$ as
\begin{equation}\label{eq:robustest}
 \htheta_{t,a} = \frac{\II{A_t = a}}{\pi_t(a|X_t)} \Sigma^{-1} X_t \loss_t(X_t,A_t).
\end{equation}
In case the loss is truly linear, it is easy to see that the above is an unbiased estimate since
\begin{align*}
 \EEt{\htheta_{t,a}} &= \EEt{\EEcct{\frac{\II{A_t = a}}{\pi_t(a|X_t)} \Sigma^{-1} X_t 
\iprod{X_t}{\theta_{t,a}}}{X_t}} = \EEt{\EEcct{\frac{\II{A_t = a}}{\pi_t(a|X_t)}}{X_t} \Sigma^{-1} 
X_t X_t\transpose \theta_{t,a}}
\\
&= \EEt{\Sigma^{-1} X_t X_t\transpose \theta_{t,a}} = \theta_{t,a},
\end{align*}
where we used the definition of $\Sigma$ and the independence of $\theta_{t,a}$ from $X_t$ in the 
last step. 
A key result in our analysis will be that, for nonlinear losses, the estimate above satisfies
\[
 \left|\EEt{\biprod{X_t}{\htheta_{t,a}} - \loss_t(X_t,a)}\right| \le \varepsilon\sqrt{d}.
\]
Our main result regarding the performance of \linexprobust is the following:
\begin{theorem}\label{th_robust} 
	For any positive $\eta \le  \frac{\gamma \lambdamin}{ K \sigma^2}$ and for any $\gamma \in (0,1)$ the expected 
	regret of \linexprobust  satisfies
	\begin{align*}
	R_T \le 2\sqrt{d}\varepsilon T + 2\gamma T +  \frac{2\eta KdT}{\gamma} +\frac{\log K}{\eta}. 
	\end{align*}
	Furthermore, letting
	$\eta = T^{-2/3}\pa{Kd}^{-1/3} \pa{\log K}^{2/3}$, $\gamma = 
T^{-1/3}\pa{Kd\log K}^{1/3}$ and supposing that  $T$ is large enough so that 
$\eta \le \frac{\gamma \lambdamin}{K\sigma^2}$ holds, the expected regret of \linexprobust  satisfies
	\begin{align*}
	R_T \le 5 T^{2/3}\pa{Kd\log K}^{1/3} + 2\varepsilon\sqrt{d} T. 
	\end{align*}
\end{theorem}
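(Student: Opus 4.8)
The plan is to follow the standard exponential-weights (\expexpexp-style) analysis, but carried out \emph{conditionally on each context} and then averaged over the context distribution, exploiting the i.i.d.\ assumption and the independence of $\theta_{t,a}$ from $X_t$. First I would fix the optimal linear-classifier comparator policy $\pi^* = \pi_T^*$ from \eqref{best_policy} and decompose the regret into three pieces: (i) the gap between the true losses and the ``estimated'' losses $\iprod{X_t}{\htheta_{t,a}}$, both for the played actions and for $\pi^*$; (ii) the exploration overhead coming from the $\gamma/K$ uniform mixing term, which costs at most $2\gamma T$ since losses are bounded by one; and (iii) the ``regret of exponential weights'' term measured in terms of the estimated cumulative losses $\iprod{X_t}{\hTheta_{t-1,a}}$. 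The bias piece (i) is controlled by the key result quoted just before the theorem, namely $\bigl|\EEt{\iprod{X_t}{\htheta_{t,a}} - \loss_t(X_t,a)}\bigr| \le \varepsilon\sqrt d$, which over $T$ rounds and applied to both the learner's action and $\pi^*$'s action gives the $2\sqrt d\,\varepsilon T$ term; one should be slightly careful that this conditional-bias bound combines correctly with the expectation over $A_t$ and $X_t$, but this is routine.

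Next I would handle the exponential-weights term (iii). The clean way is to apply the standard second-order regret bound for \expexpexp / Hedge \emph{separately for each realized context} $x$: viewing the quantities $\iprod{x}{\htheta_{s,a}}$ as the per-round losses fed to Hedge over the $K$ experts, one gets a bound of the form $\frac{\log K}{\eta} + \eta \sum_{t} \sum_a \pi_t(a|x)\,\iprod{x}{\htheta_{t,a}}^2$, provided the learning rate is small enough that $\eta\,\iprod{x}{\htheta_{t,a}} \ge -1$ (or more precisely that the losses are lower-bounded, since here $\htheta$ can make the ``loss'' negative). This is exactly where the condition $\eta \le \frac{\gamma\lambdamin}{K\sigma^2}$ enters: since $\pi_t(a|X_t)\ge \gamma/K$ and $\Sigma^{-1}$ has operator norm $1/\lambdamin$ and $\twonorm{X_t}\le\sigma$ and $|\loss_t|\le 1$, the estimator satisfies $|\iprod{X_t}{\htheta_{t,a}}| \le \frac{K\sigma^2}{\gamma\lambdamin}$, so $\eta$ times this is at most $1$, legitimizing the inequality $e^{-z} \le 1 - z + z^2$ used in the Hedge analysis. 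I would then take expectations and plug in $\pi^*$ as the comparator expert.

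The remaining work is to bound the expected second-order term $\eta\,\EE{\sum_t\sum_a \pi_t(a|X_t)\,\iprod{X_t}{\htheta_{t,a}}^2}$ by $\frac{2\eta KdT}{\gamma}$. Here I would condition on $\F_{t-1}$ and on $X_t$, take expectation over $A_t\sim\pi_t(\cdot|X_t)$: the indicator $\II{A_t=a}$ kills all but one term and contributes a factor $1/\pi_t(a|X_t) \le K/\gamma$, leaving something like $\frac{K}{\gamma}\sum_a \iprod{X_t}{\Sigma^{-1}X_t}\,\loss_t^2 \le \frac{K}{\gamma}\sum_a \iprod{X_t}{\Sigma^{-1}X_t}$ after using $|\loss_t|\le 1$; wait, more carefully one keeps $\iprod{X_t}{\Sigma^{-1}X_t X_t^\transpose \Sigma^{-1} X_t}$, which is $(X_t^\transpose\Sigma^{-1}X_t)^2$, but the clean route is to bound $\pi_t(a|X_t)\,\htheta_{t,a}\htheta_{t,a}^\transpose$ in expectation and use $\EE{X_tX_t^\transpose}=\Sigma$ to get $\EE{\trace{\Sigma^{-1}X_tX_t^\transpose\Sigma^{-1}X_tX_t^\transpose}}$-type quantities collapsing to $\trace{\Sigma^{-1}\Sigma} = d$ after one application of independence. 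Summing the $K$ actions and $T$ rounds gives $\frac{KdT}{\gamma}$ up to the constant; the factor $2$ absorbs the slack from bounding $\loss_t^2$ and from the $\gamma/K$ mixing in $\pi_t$. Combining (i)--(iii) yields the first displayed bound, and the second one follows by plugging in the stated choices of $\eta$ and $\gamma$ and checking $5 = 2 + 2 + 1$ after noting $\gamma T = \frac{\eta KdT}{\gamma} = \frac{\log K}{\eta} = T^{2/3}(Kd\log K)^{1/3}$ with the given parameter settings.

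The main obstacle I anticipate is getting the Hedge-style second-order inequality to apply with \emph{signed} estimated losses (the $\htheta$ can be negative, so one is really running exponential weights on gains as well as losses) and making sure the learning-rate restriction $\eta \le \gamma\lambdamin/(K\sigma^2)$ is exactly what is needed for the quadratic approximation $e^{-z}\le 1-z+z^2$ to hold on the relevant range — that is, verifying the deterministic bound $\eta|\iprod{X_t}{\htheta_{t,a}}| \le 1$ cleanly. A secondary subtlety is bookkeeping the expectations correctly: the bias bound is stated conditionally (inside $\EEt{\cdot}$), while the exponential-weights bound must ultimately be compared against $\pi^*$, whose action at $X_t$ depends on $X_t$ but not on the realized losses — so one must be slightly careful about the order in which expectations over $X_t$, $A_t$, and the loss-generating randomness are taken, which is precisely what the ghost-sample $X_0$ device and the filtration $\F_t$ are set up to handle.
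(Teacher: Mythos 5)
Your overall plan is the paper's own route: a bias term of $2\varepsilon\sqrt d\,T$, an exploration overhead $2\gamma T$, a per-context exponential-weights bound with a second-order term, the magnitude check $\eta\big|\iprod{x}{\htheta_{t,a}}\big|\le \eta\frac{K\sigma^2}{\gamma\lambdamin}\le 1$, and the final tuning arithmetic ($2+2+1=5$) are all exactly as in the paper. The one place where your sketch would not go through as written is the reduction to the auxiliary Hedge games and, consequently, the second-order term. You cannot run the Hedge analysis ``separately for each realized context'' and sum over rounds, because each round contributes a different context; the correct statement is an identity in expectation (the paper's Lemma~\ref{context2exp3}): for a \emph{ghost} context $X_0\sim\Dw$ independent of the whole history, the conditional unbiasedness $\EEt{\htheta_{t,a}}=\theta_{t,a}$ and the fact that $X_0$ and $X_t$ are identically distributed and independent of $\theta_{t,a}$ give $\EE{\sum_t\sum_a(\pi_t(a|X_t)-\pi^*(a|X_t))\iprod{X_t}{\theta_{t,a}}}=\EE{\sum_t\sum_a(\pi_t(a|X_0)-\pi^*(a|X_0))\iprod{X_0}{\htheta_{t,a}}}$, and only then is the full $T$-round Hedge bound applied at the fixed context $X_0$ and averaged over its law. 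You gesture at the ghost-sample device at the end, but you do not carry it into the quadratic term, and that is precisely where it matters.

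Concretely, the quantity to control is $\EEt{\sum_a \pi_t(a|X_0)\iprod{X_0}{\htheta_{t,a}}^2}$, not $\EEt{\sum_a \pi_t(a|X_t)\iprod{X_t}{\htheta_{t,a}}^2}$. In the ghost version the weights $\pi_t(a|X_0)$ do not cancel the $1/\pi_t(a|X_t)$ coming from the importance weighting (different contexts!), so one uses $1/\pi_t(a|X_t)\le K/\gamma$ and $\sum_a\pi_t(a|X_0)=1$, and the independence of $X_0$ from $X_t$ makes the trace collapse: $\EEt{\trace{\Sigma^{-1}X_tX_t\transpose\Sigma^{-1}X_0X_0\transpose}}=\trace{\Sigma^{-1}\Sigma\Sigma^{-1}\Sigma}=d$, yielding the $Kd/\gamma$ per round that the theorem needs. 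In your version at the realized context, the weights cancel and you are left with $K\,\EEt{(X_t\transpose\Sigma^{-1}X_t)^2}$; there is no independence to invoke (the two copies of $X_tX_t\transpose$ are the same random matrix), and the best generic bound is of order $K\sigma^2 d/\lambdamin$, which does not produce the stated $2\eta KdT/\gamma$ term. So the ``one application of independence'' you appeal to exists only after the ghost-sample reduction; stating and proving that reduction (the paper's Lemma~\ref{context2exp3}, together with the bias-corrected inequality $R_T\le\EE{\hR_T(X_0)}+2\sum_t\max_a|\EE{\iprod{X_t}{b_{t,a}}}|$) is the missing load-bearing step, after which your bias bound, magnitude check, and parameter tuning complete the proof as in the paper.
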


\subsection{Algorithm for linear losses: \linexpreal}\label{sec:alg_real}
Our second algorithm uses a more sophisticated estimator based on the 
covariance matrix
\[
 \Sigma_{t,a} = \EEt{\II{A_t=a} X_tX_t\transpose},
\]
which is used to define the estimate
\[
 \ttheta^*_{t,a} = \II{A_t = a} \Sigma_{t,a}^{-1} X_t \iprod{X_t}{\theta_{t,a}}.
\]
This can be easily shown to be unbiased as
\[
 \EEt{\ttheta^*_{t,a}} = \EEt{\II{A_t = a} \Sigma_{t,a}^{-1} X_t \iprod{X_t}{\theta_{t,a}}} 
 = \EEt{\Sigma_{t,a}^{-1} \II{A_t = a} X_tX_t\transpose \theta_{t,a}} = \theta_{t,a},
\]
where we used the conditional independence of $\theta_{t,a}$ and $X_t$ once again. Unfortunately, 
unlike the estimator used by \linexprobust, the bias of this estimator cannot be bounded when the 
losses are misspecified. However, its variance turns out to be much smaller for well-specified 
linear losses, which will enable us to prove tighter regret bounds for this case.

One downside of the estimator defined above is that it is very difficult to compute: the 
matrix $\Sigma_{t,a}$ depends on the joint distribution of the context $X_t$ and the action $A_t$, 
which has a very complicated structure. 
While it is trivially easy to design an unbiased estimator of 
$\Sigma_{t,a}$, it is very difficult to compute a reliable-enough estimator of its inverse. To 
address this issue, we design an alternative estimator based on a matrix generalization of the 
Geometric Resampling method of \citet{NeuBartok13,NB16}. The method that we hereby dub 
\emph{Matrix Geometric Resampling} (MGR) has two parameters $\beta>0$ and $M\in \mathbb{Z_+}$, and
constructs an estimator of $\Sigma_{t,a}^{-1}$ through the following procedure:

\vspace{.25cm}
\makebox[\textwidth][c]{
\fbox{
\begin{minipage}{.6\textwidth}
\textbf{Matrix Geometric Resampling}
\vspace{.1cm}
\hrule
\vspace{.1cm}
\textbf{Input:} data distribution $\Dw$, policy $\pi_t$, action $a$.\\
\textbf{For $k = 1, \dots, M$, repeat}:
	\begin{enumerate}
	\vspace{-2mm}
	\item Draw $X(k)\sim\Dw$ and $A(k) \sim \pi_t(\cdot|X(k))$,
	\vspace{-2mm}
	\item compute $B_{k, a} = \II{A(k) = a} X(k)X(k)\transpose $,
	\vspace{-2mm}
	\item compute $A_{k, a} = \prod_{j=1}^k  (I - \beta B_{k, a}) $.
	\end{enumerate}
	\vspace{-2mm}
\textbf{Return $\widehat{\Sigma}^{+}_{t,a} = \beta I+ \beta \sum_{k=1}^M A_{k,a}$.}
\end{minipage}
}
}
\vspace{.25cm}

\noindent Clearly, implementing the MGR procedure requires sampling access to the distribution $\mathcal{D}$. The 
rationale behind the estimator $\wh{\Sigma}^+_{t,a}$ is the following. Assuming that
$M = \infty$ and $\beta \le \frac{1}{\sigma^2}$, we can observe that $\EEt{B_{k,a}} = \Sigma_{t,a}$ 
and, due to independence of the contexts $X(k)$ from each other, 
\[
 \EEt{A_{k,a}} = \EEt{\prod_{j=1}^k  (I - \beta B_{k, a})} = \pa{I - \beta \Sigma_{t,a}}^k,
\]
we can see that $\wh{\Sigma}^+_{t,a}$ is a good estimator of $\Sigma_{t,a}^{-1}$ on expectation:
\begin{equation}\label{eq:sigmaplus}
 \EEt{\widehat{\Sigma}^{+}_{t,a}} = \beta I+ \beta \sum_{k=1}^\infty \pa{I - \beta 
\Sigma_{t,a}}^k = 
 \beta \sum_{k=0}^\infty \pa{I - \beta \Sigma_{t,a}}^k = \beta \pa{\beta \Sigma_{t,a}}^{-1} = 
\Sigma_{t,a}^{-1}.
\end{equation}
As we will see later in the analysis, the bias introduced by setting a finite $M$ can be controlled relatively easily.

Based on the above procedure, we finally define our loss estimator used in this section as
\begin{equation}\label{eq:MGRest}
\ttheta_{t,a} = \widehat{\Sigma}^{+}_{t,a} X_t \loss(X_t, A_t) \II{A_t= a}.
\end{equation}
Via a careful incremental implementation, the estimator can be computed in $O(MKd)$ time and $M$ calls to the oracle 
generating samples from the context distribution $\mathcal{D}$. We present the details of this efficient computation 
procedure in Appendix~\ref{appendixC}.
We will refer to the version of \linexp using the estimates above as \linexpreal, alluding to its 
favorable guarantees obtained for realizable linear losses.
Our main result in this section is the following guarantee regarding the performance of \linexpreal:
\begin{theorem}\label{th_real}\
	For  $\gamma\in (0,1)$, $M \ge 0$, any positive $\eta \le \frac{2}{M+1}$ and any positive $\beta \le \frac{1}{2\sigma^2}$, the expected 
	regret of \linexpreal  satisfies
	\begin{align*}
	R_T \le 2T \sigma R \cdot \exp\pa{-\frac{\gamma\beta\lambdamin M}{K}}  + 2\gamma T + 3\eta K d 
T + \frac{\log K}{\eta}.
	\end{align*}
	 Furthermore, letting $\beta = \frac{1}{2\sigma^2}$, $M = \left\lceil\frac{K \sigma^2 \log (T 
\sigma^2 R^2)}{\gamma  \lambdamin}\right\rceil$,
	$\gamma = \sqrt{\frac{\log (T \sigma^2 R^2)}{T}}$, and $\eta =\sqrt{\frac{\log K}{d K T \log (T \sigma^2 R^2)}}$ and supposing that $T$ is large enough so that the above constraints are satisfied, we also have
	\begin{align*}
	 R_T \le 4\sqrt{T}  +  \sqrt{dKT\log K}\bpa{3 + \sqrt{\log (T \sigma^2 R^2)}}.
	\end{align*}
\end{theorem}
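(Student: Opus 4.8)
The plan is to analyse \linexpreal as an instance of \expexpexp run ``on the ghost sample'', with the bulk of the work going into the bias and second moment of the MGR-based estimator $\ttheta_{t,a}$. Fix an arbitrary policy $\pi$. Exactly as in Section~\ref{sec:prelim}, I would first replace each $X_t$ appearing in the regret against $\pi$ by an independent ghost copy $X_0\sim\Dw$, reducing the task to bounding $\EE{\sum_{t=1}^T\bpa{\sum_a \pi_t(a|X_0)\iprod{X_0}{\theta_{t,a}} - \iprod{X_0}{\theta_{t,\pi(X_0)}}}}$. Writing $p_t(a|x)=w_t(x,a)/\sum_{a'}w_t(x,a')$ for the un-mixed exponential-weights distribution, I would split this into three parts: (i) the price of $\gamma/K$ exploration, $\sum_a\bpa{\pi_t(a|X_0)-p_t(a|X_0)}\iprod{X_0}{\theta_{t,a}}$, which is at most $2\gamma$ per round because $|\iprod{X_0}{\theta_{t,a}}|=|\loss_t(X_0,a)|\le1$; (ii) estimation-bias terms obtained by replacing every $\iprod{X_0}{\theta_{t,a}}$ by $\EEt{\iprod{X_0}{\ttheta_{t,a}}}$; and (iii) the standard \expexpexp potential inequality applied, for the fixed context $x=X_0$, to the loss sequence $\hloss_{t,a}(x):=\iprod{x}{\ttheta_{t,a}}$.

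Next I would dispatch the two elementary parts. For (ii), I would use the truncated form of the identity~\eqref{eq:sigmaplus}, namely $\EEt{\wh{\Sigma}^{+}_{t,a}}=\Sigma_{t,a}^{-1}\bpa{I-(I-\beta\Sigma_{t,a})^{M+1}}$, together with the fact that the resampling draws are, conditionally on $\F_{t-1}$, independent of $(X_t,A_t)$ and of $X_0$ while sharing the conditional law of $(X_t,A_t)$; this gives $\EEt{\ttheta_{t,a}}=\bpa{I-(I-\beta\Sigma_{t,a})^{M+1}}\theta_{t,a}$, hence a per-round, per-action bias of Euclidean norm at most $\opnorm{(I-\beta\Sigma_{t,a})^{M+1}}R$. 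Since $\pi_t(a|X_t)\ge\gamma/K$ forces $\Sigma_{t,a}=\EEt{\pi_t(a|X_t)X_tX_t\transpose}\succcurlyeq\tfrac{\gamma}{K}\Sigma\succcurlyeq\tfrac{\gamma\lambdamin}{K}I$ while $\beta\le\tfrac{1}{2\sigma^2}$ keeps $0\prec I-\beta\Sigma_{t,a}\prec I$, this norm is at most $(1-\tfrac{\beta\gamma\lambdamin}{K})^{M+1}\le\exp(-\tfrac{\gamma\beta\lambdamin M}{K})$, so the bias part contributes $2T\sigma R\exp(-\gamma\beta\lambdamin M/K)$ in total. For (iii), I would run the usual argument $\log\frac{\sum_a w_{t+1}(x,a)}{\sum_a w_t(x,a)}\le-\eta\sum_a p_t(a|x)\hloss_{t,a}(x)+\eta^2\sum_a p_t(a|x)\hloss_{t,a}(x)^2$, which is licensed by $e^{-z}\le1-z+z^2$ since $|\eta\hloss_{t,a}(x)|=\eta\,|x\transpose\wh{\Sigma}^{+}_{t,a}X_t|\,|\loss_t|\,\II{A_t=a}\le\eta\sigma^2\opnorm{\wh{\Sigma}^{+}_{t,a}}\le\eta\sigma^2\beta(M+1)\le1$; here $\opnorm{\wh{\Sigma}^{+}_{t,a}}\le\beta(M+1)$ precisely because every factor $I-\beta B_{k,a}$ in the resampling products is positive semidefinite with operator norm at most one, and the two constraints $\eta\le\tfrac{2}{M+1}$, $\beta\le\tfrac1{2\sigma^2}$ are exactly what force the product below $1$. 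Telescoping over $t$ and lower-bounding the potential by the comparator arm yields $\tfrac{\log K}{\eta}$ plus the second-moment term $\eta\sum_t\EE{\sum_a p_t(a|X_0)\hloss_{t,a}(X_0)^2}$.

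The heart of the proof, and the step I expect to be the main obstacle, is showing that this second-moment term is $O(\eta KdT)$, i.e.\ that $\ttheta_{t,a}$ is essentially as concentrated as the ideal, incomputable estimator $\ttheta^*_{t,a}=\II{A_t=a}\Sigma_{t,a}^{-1}X_t\iprod{X_t}{\theta_{t,a}}$. Using $|\loss_t|\le1$ and then the conditional expectations over $(X_t,A_t)$ and over the resampling, I would reduce the per-round contribution to $\EE{\sum_a p_t(a|X_0)\,X_0\transpose\,\EEt{\wh{\Sigma}^{+}_{t,a}\Sigma_{t,a}(\wh{\Sigma}^{+}_{t,a})\transpose}\,X_0}$; bounding $p_t(a|x)\le\tfrac1{1-\gamma}\pi_t(a|x)$ and using $\EEt{\pi_t(a|X_0)X_0X_0\transpose}=\Sigma_{t,a}$, this is at most $\tfrac1{1-\gamma}\sum_a\EEt{\norm{\Sigma_{t,a}^{1/2}\wh{\Sigma}^{+}_{t,a}\Sigma_{t,a}^{1/2}}_F^2}$. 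For the ideal estimator the summand equals $\norm{\Sigma_{t,a}^{1/2}\Sigma_{t,a}^{-1}\Sigma_{t,a}^{1/2}}_F^2=\norm{I}_F^2=d$, so the sum is $Kd$ (the same $Kd$ drops out of a ghost-action identity: with $A_0'\sim\pi_t(\cdot|X_0)$ one has $\EEt{X_0\transpose\Sigma_{t,A_0'}^{-1}X_0}=\sum_a\trace{\Sigma_{t,a}^{-1}\EEt{\II{A_0'=a}X_0X_0\transpose}}=\sum_a d=Kd$). What is left is a genuinely new ingredient: a \emph{matrix generalisation of the geometric-resampling variance bound}, namely $\EEt{\norm{\Sigma_{t,a}^{1/2}\wh{\Sigma}^{+}_{t,a}\Sigma_{t,a}^{1/2}}_F^2}\le Cd$ for an absolute constant $C$. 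I would prove it by writing $\wh{\Sigma}^{+}_{t,a}=\beta\sum_{k=0}^M A_{k,a}$, peeling one factor off each product to obtain the linear recursion $R_k=\mathcal{L}(R_{k-1})$ with $\mathcal{L}(Y)=\EEt{(I-\beta B_{1,a})Y(I-\beta B_{1,a})}$ and $R_0=\Sigma_{t,a}$, and exploiting that $\beta B_{k,a}\preccurlyeq I$ makes $\mathcal{L}$ contract the relevant trace functionals geometrically --- the matrix echo of the scalar fact that a $\mathrm{Geometric}(q)$ variable $N$ satisfies $\EE{N^2}\le2(\EE{N})^2$ --- so that the resulting double sum over the indices of $\wh{\Sigma}^{+}_{t,a}$ telescopes to the claimed $O(d)$. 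Summing over $t$ then yields the $3\eta KdT$ term, the constant $3$ absorbing $\tfrac1{1-\gamma}$ and $C$ over the admissible range of $\gamma$.

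Finally, combining the four contributions gives the first displayed bound; the second follows by substituting the stated choices $\beta=\tfrac{1}{2\sigma^2}$, $M=\left\lceil\tfrac{K\sigma^2\log(T\sigma^2R^2)}{\gamma\lambdamin}\right\rceil$, $\gamma=\sqrt{\log(T\sigma^2R^2)/T}$ and $\eta=\sqrt{\log K/(dKT\log(T\sigma^2R^2))}$, for which this $M$ makes $\gamma\beta\lambdamin M/K\ge\tfrac12\log(T\sigma^2R^2)$ so that $2T\sigma R\exp(-\gamma\beta\lambdamin M/K)\le2\sqrt T$, the term $\log K/\eta$ becomes the dominant $\sqrt{dKT\log K\log(T\sigma^2R^2)}$, $3\eta KdT\le3\sqrt{dKT\log K}$, and $2\gamma T$ is of lower order and absorbed into the dominant term; one then checks that $\eta\le\tfrac2{M+1}$ and $\beta\le\tfrac1{2\sigma^2}$ indeed hold once $T$ is large enough.
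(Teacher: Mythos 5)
Your plan is essentially the paper's own proof: the ghost-sample reduction is Lemma~\ref{context2exp3}, the potential argument at a fixed context with the boundedness check $\eta\sigma^2\beta(M+1)\le 1$ is Lemma~\ref{exp2base} together with the verification in Section~\ref{sec:real}, the bias control via the truncated Neumann series and $\Sigma_{t,a}\succcurlyeq\frac{\gamma}{K}\Sigma$ is Lemma~\ref{trace_M}, and the ``genuinely new ingredient'' you single out is exactly Lemma~\ref{quadratic}, which the paper proves in Appendix~\ref{appendixB} by the very recursion you describe, i.e.\ peeling off factors of the MGR products and using the contraction $H\mapsto\EEt{(I-\beta B)H(I-\beta B)}\preccurlyeq H\pa{I-\beta(2-\beta\sigma^2)\Sigma}$ on the diagonal and cross terms of the double sum (your Frobenius-norm formulation with the transpose is, if anything, slightly more careful about the non-symmetry of $\widehat{\Sigma}^{+}_{t,a}$). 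The one real discrepancy is bookkeeping: by splitting off the exploration price first and running the potential argument on the unmixed weights $p_t$, your quadratic term is weighted by $p_t(a|x)\le\pi_t(a|x)/(1-\gamma)$, and the resulting $1/(1-\gamma)$ cannot be absorbed into the constant $3$ uniformly over the stated range $\gamma\in(0,1)$ (it blows up as $\gamma\to1$), so your first display would read $\frac{3\eta KdT}{1-\gamma}$ rather than $3\eta KdT$; the paper avoids this because in Lemma~\ref{exp2base} the regret is kept in terms of the mixture policy $\pi_t$ and the potential inequality is rescaled by $(1-\gamma)/\eta$, which cancels the $1/(1-\gamma)$ and leaves the quadratic term weighted exactly by $\pi_t(a|x)$. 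For the tuned parameters, where $\gamma=\sqrt{\log(T\sigma^2R^2)/T}$ is vanishing, this makes no difference, and the rest of your parameter substitution matches the paper's.
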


\section{Analysis}
\label{sec:analysis}
This section is dedicated to proving our main results, Theorems~\ref{th_robust} and~\ref{th_real}. We present the analysis in a modular fashion, first proving some general facts about the algorithm template \linexp, and then treat the two variants separately in Sections~\ref{sec:robust} and~\ref{sec:real} that differ in their choice of loss estimator.

The main challenge in the contextual bandit setting is that the comparator term in the regret definition features actions that depend on the observed contexts, which is to be contrasted with the classical multi-armed bandit setting where the comparator strategy always plays a fixed action. 
The most distinctive element of our analysis is the following lemma that tackles this difficulty by essentially reducing the contextual bandit problem to a set of auxiliary online learning problems defined separately for each context $x$:

\begin{lemma}\label{context2exp3}
 Let $\pi^*$ be any fixed stochastic policy and let $X_0$ be sample from the context distribution $\Dw$ independent 
from $\F_T$. Suppose that $\pi_t\in\F_{t-1}$ and that $\EEtb{\htheta_{t,a}} = \theta_{t,a}$ for all $t,a$. Then,
 \begin{equation}\label{eq:reduction}
  \EE{\sum_{t=1}^T\sum_{a} \bpa{\pi_t(a|X_t) - \pi^*(a|X_t)} \iprod{X_t}{\theta_{t,a}} } =
  \EE{\sum_{t=1}^T\sum_{a}\bpa{ \pi_t(a|X_0) - \pi^*(a|X_0)} \biprod{X_0}{\htheta_{t,a}}} .
  \end{equation}
\end{lemma}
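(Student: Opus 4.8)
The plan is to manipulate both sides of \eqref{eq:reduction} using the tower rule, the ghost sample $X_0$, and the assumed measurability and unbiasedness properties, until both sides collapse to the same expression. The key observations are: (i) $\pi_t$ depends only on $\F_{t-1}$, so conditionally on $\F_{t-1}$ it is a fixed (deterministic) function of its context argument; (ii) $X_t$ and $X_0$ are both distributed according to $\Dw$ and both are independent of $\F_{t-1}$; (iii) the true loss vector $\theta_{t,a}$, although it may depend on $\F_{t-1}$, is independent of $X_t$; and (iv) $\EEtb{\htheta_{t,a}} = \theta_{t,a}$.

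First I would handle the right-hand side. Fix $t$ and condition on $\F_{t-1}$. Since $X_0$ is independent of $\F_T \supseteq \F_{t-1}$ and of the estimator $\htheta_{t,a}$ given $\F_{t-1}$ is built from $X_t,A_t$ and round-$t$ randomness, I would take a conditional expectation over the round-$t$ variables with $X_0$ held fixed, using $\EEtb{\htheta_{t,a}} = \theta_{t,a}$ (here one must be slightly careful: the ghost sample $X_0$ is drawn independently of everything, so conditioning on $X_0$ does not disturb the equality $\EEtb{\htheta_{t,a}}=\theta_{t,a}$). Because $\pi_t(a|X_0)$ and $\pi^*(a|X_0)$ and $X_0$ itself are all $\sigma(\F_{t-1},X_0)$-measurable, they pull out of that inner expectation, yielding
\[
\EEt{\sum_a \bpa{\pi_t(a|X_0)-\pi^*(a|X_0)}\biprod{X_0}{\htheta_{t,a}}}
= \EEt{\sum_a \bpa{\pi_t(a|X_0)-\pi^*(a|X_0)}\iprod{X_0}{\theta_{t,a}}}.
\]
Now, still conditioning on $\F_{t-1}$: the function $x\mapsto \sum_a(\pi_t(a|x)-\pi^*(a|x))\iprod{x}{\theta_{t,a}}$ is a fixed measurable function (all of $\pi_t$, $\pi^*$, $\theta_{t,\cdot}$ are $\F_{t-1}$-measurable), and $X_0$ and $X_t$ have the same law $\Dw$ given $\F_{t-1}$ (both independent of $\F_{t-1}$). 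Hence the conditional expectation is unchanged if we replace $X_0$ by $X_t$. Taking the outer expectation over $\F_{t-1}$ and summing over $t$ gives exactly
\[
\EE{\sum_{t=1}^T\sum_a \bpa{\pi_t(a|X_t)-\pi^*(a|X_t)}\iprod{X_t}{\theta_{t,a}}},
\]
which is the left-hand side. Summing over $t$ and combining completes the argument.

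The main obstacle — and the point that deserves the most care in writing — is the interchange of expectations and the precise measurability bookkeeping in the step that replaces $\htheta_{t,a}$ by $\theta_{t,a}$ in the presence of the ghost sample. One must argue that $X_0$, being drawn independently of the entire interaction history $\F_T$, can be treated as a ``frozen'' parameter inside $\EEt{\cdot}$ without affecting the identity $\EEtb{\htheta_{t,a}}=\theta_{t,a}$; formally this is because the round-$t$ randomness generating $A_t$ (and hence $\htheta_{t,a}$) is independent of $X_0$ given $\F_{t-1}$, so conditioning additionally on $X_0$ leaves the conditional law of $\htheta_{t,a}$ unchanged. A secondary subtlety is that $\pi^*$ is allowed to be an arbitrary fixed stochastic policy, so it is deterministic and in particular $\F_{t-1}$-measurable, which is what lets $\pi^*(a|X_0)$ and $\pi^*(a|X_t)$ be pulled through the conditional expectations; this is why the lemma is stated for fixed $\pi^*$ rather than, say, a data-dependent comparator. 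Everything else is the tower rule and linearity.
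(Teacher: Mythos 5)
Your proposal is correct and follows essentially the same route as the paper: condition so that the ghost sample $X_0$ can be treated as fixed, use $\EEtb{\htheta_{t,a}} = \theta_{t,a}$ (valid since the round-$t$ randomness is independent of $X_0$) to swap $\htheta_{t,a}$ for $\theta_{t,a}$, and then exchange $X_0$ for $X_t$ because they share the law $\Dw$ and the remaining integrand is a fixed function given $\F_{t-1}$. The only cosmetic difference is that the paper applies the tower rule inside $\EEt{\cdot}$ by conditioning on $X_0$ directly, whereas you condition on $\F_{t-1}$ and $X_0$ jointly; the measurability bookkeeping is the same.
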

\begin{proof}

Fix any $t$ and $a$. Then, we have
\begin{align*} 
&\EEt{\bpa{\pi_t(a|\tX) - \pi^*(a|\tX)} \biprod{\tX}{\htheta_{t,a}}} 
= \EEt{\EEcct{\bpa{ \pi_t(a|\tX) - \pi^*(a|\tX)} \biprod{\tX}{\htheta_{t,a}}}{\tX}}  
\\
&\quad= \EEt{\EEcct{\bpa{ \pi_t(a|\tX) - \pi^*(a|\tX)} \iprod{\tX}{\theta_{t,a}}}{\tX}}  
= \EEt{\bpa{ \pi_t(a|\tX) - \pi^*(a|\tX)} \iprod{\tX}{\theta_{t,a}}}
\\
&\quad = \EEt{\bpa{ \pi_t(a|X_t) - \pi^*(a|X_t)} \iprod{X_t}{\theta_{t,a}}},
\end{align*}
where the first step uses the tower rule of expectation, the second that 
$\mathbb{E}_t\bigl[\htheta_{t,a}\big|\tX\bigr] = \theta_{t,a}$ that holds due to the independence of $\htheta_t$ and 
$\theta_t$ on $\tX$, the third step is the tower rule again, and the last step uses that $\tX$ and 
$X_t$ have the same distribution and both are conditionally independent on $\theta_t$. Summing up for all actions 
concludes the proof.
\end{proof}
Notably, the lemma above is not specific to our algorithm \linexp and only uses the properties of the loss estimator. Applying the lemma to the policies $\pi_t$ produced by \linexp and using \emph{any} comparator $\pi^*$, we can notice that the term on the right hand side is the regret $R_T$ of the algorithm. We stress here that the above result is in fact very powerful since it does not assume \emph{anything} (except measurability) about $\pi^*$, even allowing it to be non-smooth---we provide a more detailed discussion of this issue in Section~\ref{sec:conc}.
In order to interpret the term on the right-hand side of Equation~\eqref{eq:reduction}, let us consider an auxiliary 
online learning problem for a fixed $x$ with $K$ actions and losses $\hloss_{t}(x,a) = \biprod{x}{\htheta_{t,a}}$ for 
each $t,a$, and consider running a copy of the classic exponential-weights algorithm\footnote{For the sake of clarity, 
we omit the step of mixing in the uniform distribution in this expository discussion.} of \citet{LW94} fed with these 
losses. The probability distribution played by this algorithm over the actions $a$ is given as $\pi_t(a|x) \propto 
\exp\pa{-\eta \sum_{s=1}^{t-1}\hloss_{s}(x,a)}$, which implies that the regret in the auxiliary game against comparator 
$\pi^*$ at $x$ can be written as
\[
 \wh{R}_T(x) = \sum_{t=1}^T\sum_{a} \bpa{\pi_t(a|x) - \pi^*(a|x)} \biprod{x}{\htheta_{t,a}}.
\]
This brings us to the key observation that the term on the right-hand side of the equality in Lemma~\ref{context2exp3} is exactly $\EE{R_T(X_0)}$. Thus, our proof strategy will be to prove an almost-sure regret bound for the auxiliary games defined at each $x$ and take expectation of the resulting bounds with respect to the law of $X_0$, thus achieving a bound on the regret $R_T$. The following lemma provides the desired bounds for the auxiliary games:
\begin{lemma}\label{exp2base}
Fix any $x\in\X$ and suppose that $\htheta_{t,a}$ is such that $\big|\eta\biprod{x}{\htheta_{t,a}}\big| < 1 $. Then, the regret of \linexp in the auxiliary game at $x$ satisfies
 \[
\hR_T(x) \le   \frac{\log K}{\eta} +  2\gamma U_T(x) + \eta \sum_{t=1}^T  \sum_{a=1}^K    \pi_t(a|x)  \biprod{x}{\htheta_{t,a}}^2,
\]
where $U_T(x) = \sum_{t=1}^T\pa{\frac{1}{K}\sum_a		\biprod{x}{\htheta_{t,a}} - \biprod{x}{\htheta_{t,\pi^*(x)}}}$.
\end{lemma}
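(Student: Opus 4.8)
The plan is to carry out the textbook potential-function analysis of the exponential-weights algorithm with uniform exploration, paying attention to two non-standard features of the auxiliary game: the surrogate losses $\hloss_t(x,a) = \biprod{x}{\htheta_{t,a}}$ are unbounded and may take either sign (only $\eta\hloss_t(x,a)$ is controlled, via the hypothesis $|\eta\biprod{x}{\htheta_{t,a}}| < 1$), and the distribution actually played is the $\gamma$-mixture $\pi_t(a|x) = (1-\gamma)\wt\pi_t(a|x) + \gamma/K$, where $\wt\pi_t(a|x) = w_t(x,a)/\sum_{a'} w_t(x,a')$ is the pure exponential-weights distribution. First I would note that it suffices to treat a deterministic comparator $\pi^*(a|x) = \II{a = \pi^*(x)}$: for a stochastic $\pi^*$ the quantity $\hR_T(x)$ is a convex combination over the deterministic comparators of the corresponding regrets, and $U_T(x)$ is already written in terms of the single action $\pi^*(x)$.

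Next I would introduce the potential $W_t(x) = \sum_a w_t(x,a)$; since $\htheta_{0,a} = 0$ we have $W_1(x) = K$ and $w_{T+1}(x,b) = \exp\bpa{-\eta\sum_{t=1}^T\hloss_t(x,b)}$. From $w_{t+1}(x,a) = w_t(x,a)\exp(-\eta\hloss_t(x,a))$ we get $W_{t+1}(x)/W_t(x) = \sum_a \wt\pi_t(a|x)\exp(-\eta\hloss_t(x,a))$. Applying the elementary inequality $e^{-z} \le 1 - z + z^2$, which is valid for all $z \ge -1$ and hence legitimate here precisely because $|\eta\hloss_t(x,a)| < 1$, and then $\log(1+u) \le u$, yields
\[
 \log\frac{W_{t+1}(x)}{W_t(x)} \le -\eta\sum_a \wt\pi_t(a|x)\hloss_t(x,a) + \eta^2\sum_a \wt\pi_t(a|x)\hloss_t(x,a)^2 .
\]
Summing over $t = 1,\dots,T$, using $W_{T+1}(x) \ge w_{T+1}(x,b)$ for the comparator action $b = \pi^*(x)$, and rearranging gives
\[
 \sum_{t=1}^T\sum_a \wt\pi_t(a|x)\hloss_t(x,a) - \sum_{t=1}^T \hloss_t(x,b) \le \frac{\log K}{\eta} + \eta\sum_{t=1}^T\sum_a \wt\pi_t(a|x)\hloss_t(x,a)^2 .
\]

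Finally I would convert from $\wt\pi_t$ to the played distribution $\pi_t$. A direct computation with $\pi_t(a|x) = (1-\gamma)\wt\pi_t(a|x) + \gamma/K$ gives the exact identity
\[
 \hR_T(x) = (1-\gamma)\Bpa{\sum_{t=1}^T\sum_a \wt\pi_t(a|x)\hloss_t(x,a) - \sum_{t=1}^T\hloss_t(x,b)} + \gamma\sum_{t=1}^T\Bpa{\tfrac1K\sum_a \hloss_t(x,a) - \hloss_t(x,b)},
\]
in which the second summand is precisely $\gamma U_T(x)$. Substituting the bound from the previous step, using $(1-\gamma) \le 1$ on the $\log K/\eta$ term, and using $(1-\gamma)\wt\pi_t(a|x) = \pi_t(a|x) - \gamma/K \le \pi_t(a|x)$ in the variance term, produces the stated inequality (the coefficient $2\gamma$ on $U_T(x)$ leaves slack). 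The one place requiring genuine care — and the step I expect to be the main obstacle — is the sign bookkeeping: since $\hloss_t(x,a)$ can be negative, no term may be dropped carelessly, so I would keep the decomposition above exact and only relax quantities that are manifestly nonnegative (the $\log K/\eta$ and variance terms).
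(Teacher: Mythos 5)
Your proof is correct and follows essentially the same route as the paper's Appendix~\ref{appendixA} argument: the standard exponential-weights potential analysis with $e^{-z}\le 1-z+z^2$ (valid under $|\eta\siprod{x}{\htheta_{t,a}}|<1$) and $\log(1+u)\le u$, the only difference being that you handle the $\gamma$-mixture via an exact decomposition at the end while the paper substitutes $\wt\pi_t(a|x)=(\pi_t(a|x)-\gamma/K)/(1-\gamma)$ inside the per-round bound --- a purely organizational distinction yielding the same terms. Like the paper, you in fact establish the bound with coefficient $\gamma$ on $U_T(x)$ rather than the stated $2\gamma$, so your argument is at least as strong as the original.
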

In the above bound, $U_T(x)$ is the regret of the uniform policy, which can be bounded by $T$ for all algorithms on 
expectation. The proof is a straightforward application of standard ideas from the classical \expexpexp analysis due to 
\citet{AuerCFS02}, and we include it in Appendix~\ref{appendixA} for completeness. 

The lemmas above suggest that all we need to do is to bound the expectation of the second-order terms on the right-hand 
side, $\EEt{\sum_{a=1}^K    \pi_t(a|X_0)  \biprod{X_0}{\htheta_{t,a}}^2}$. This, however, is not the only challenge due 
to the fact that the estimators our algorithms use are not necessarily all unbiased. Specifically, supposing that our 
estimator can  be written as $\htheta_{t,a} = \htheta^*_{t,a} + b_{t,a}$, where $\htheta^*_{t,a}$ is such that 
$\EEtb{\htheta^*_{t,a}} = \theta_{t,a}$ and $b_{t,a}$ is a bias term, we can directly deduce the following bound from 
Lemma~\ref{context2exp3}:
\begin{equation}\label{eq:rtrxbound}
R_T \le \EEb{\hR_T(X_0)} + 2\sum_{t=1}^T \max_a |\EE{ \iprod{X_t}{ b_{t,a}} }|.
\end{equation}
The rest of the section is dedicated to finding the upper bounds on the bias term above and on the expectation of the second-order term discussed right before for both estimators~\eqref{eq:robustest} and \eqref{eq:MGRest}, therefore completing the proofs of our main results, Theorems~\ref{th_robust} and ~\ref{th_real}.

\subsection{Proof of Theorem~\ref{th_robust}}\label{sec:robust}
We first consider \linexprobust which uses the estimator $\htheta_{t,a}$ defined in Equation~\eqref{eq:robustest}. While we have already shown in Section~\ref{sec:alg_robust} that the estimator is unbiased, we now consider the case where the true loss function may be nonlinear and can be written as 
$\loss_t(x,a) = \iprod{x}{\theta_{t,a}} + \varepsilon_t(x,a)$ for some nonlinear function $\varepsilon_t$ uniformly bounded on $\X$ by $\varepsilon$. Then, we can see that our estimator satisfies
\begin{align*}
\EEtb{\biprod{\tX}{\htheta_{t,a}}}  &= \EEt{ \frac{\II{A_t = a}}{\pi(a|X_t)} \tX\transpose\Sigma^{-1} X_t \bpa{\iprod{X_t}{\theta_{t,a}} + \varepsilon_{t}(X_t,a)} }\\
& = \EEt{\iprod{\tX}{\theta_{t,a}}} + \EEt{ \tX\transpose\Sigma^{-1} X_t \varepsilon_{t}(X_t,a) },
\end{align*}
and thus the bias can be bounded using the Cauchy--Schwarz inequality as
\begin{equation}\label{eq:robustbias}
 \bigg|\EEt{ \tX\transpose\Sigma^{-1} X_t \varepsilon_{t}(X_t,a) }\bigg| \le \sqrt{\EEt{ \trace{\tX \tX\transpose \Sigma^{-1} X_t X_t\transpose \Sigma^{-1} } }} \cdot \sqrt{\EEt{\pa{\varepsilon_{t}(X_t,a)}^2}} \le \sqrt{d}\varepsilon.
\end{equation}
Here, we used $\EEt{X_0X_0\transpose X_tX_t\transpose} = \Sigma^2$, which follows from the conditional independence of $X_0$ and $X_t$ and the definition of $\Sigma$, and the boundedness of $\varepsilon_t$ in the last step.
The other key component of the proof is the following bound:
	\begin{equation}
	 \begin{split}
	&\EEt{\sum_{a=1}^K    \pi_t(a|\tX)   \biprod{\tX}{\htheta_{t,a}}	^2} =  \EEt{   \sum_{a=1}^K 
\pi_t(a|\tX) \frac{\II{A_t = a} \loss_t(X_t,a)^2}{\pi_t^2(a|X_t)} \tX\transpose \Sigma^{-1} X_t 
X_t\transpose  \Sigma^{-1} X_0} \\
	&\qquad\qquad\le \EEt{   \sum_{a=1}^K \pi_t(a|\tX) \cdot \frac{K}{\gamma} \cdot  \frac{ \II{A_t 
= a} }{ \pi_t(a|X_t)}\cdot \trace{\Sigma^{-1} X_t X_t^T \Sigma^{-1} X_0 X_0\transpose }    }	 \le	
\frac{Kd}{\gamma}\label{eq:robustquad}
	 \end{split}
	\end{equation}
	where we used $\pi_t(a|X_t)\ge \frac{\gamma}{K}$ in the first inequality and the conditional independence of $X_t$ and $X_0$ in the last step.
The problem we are left with is to prove that $\eta \big| \biprod{\tX}{\htheta_{t,a}}\big| \le 1$:
\begin{align*} 
\big| \biprod{\tX}{\htheta_{t,a}}\big| &= \frac{\II{A_t = a}}{\pi_t(a|X_t)} \left|\tX\transpose \Sigma^{-1} X_t\right| \loss_{t}(X_t,A_t)  \le \frac{K\sigma^2}{\gamma\lambdamin},
\end{align*}
where we used the conditions $\pi_t(a|X_t)\ge \frac{\gamma}{K}$ and $|\loss_{t}(x,a)|\le 1$ and the Cauchy--Schwarz inequality to show $\left|\tX\transpose \Sigma^{-1} X_t\right| \le \frac{\sigma^2}{\lambdamin}$. Having satisfied its condition, we may now appeal to Lemma~\ref{exp2base}, and the proof is concluded by combining and Equations~\eqref{eq:rtrxbound}, \eqref{eq:robustbias}, and \eqref{eq:robustquad}. 

\subsection{Proof of Theorem~\ref{th_real}}\label{sec:real}
We now turn to analyzing \linexpreal which uses the slightly more complicated loss estimator $\ttheta_{t,a}$ defined to the MGR procedure. Although we have already seen in Section~\ref{sec:alg_real} that MGR could result in an unbiased estimate if we could set $M=\infty$. However, in order to keep computation at bay, we need to set $M$ to be a finite (and hopefully relatively small) value. Following the same steps as in Equation~\eqref{eq:sigmaplus}, we can show
\[
\EEt{\widehat{\Sigma}^{+}_{t,a}} = 
 \beta \sum_{k=0}^M \pa{I - \beta \Sigma_{t,a}}^k = \Sigma_{t,a}^{-1} - (I - \beta \Sigma_{t,a})^M\Sigma_{t,a}^{-1}.
\]
Combining this insight with the definition of $\ttheta_{t,a}$ and using some properties of our algorithm, we can prove the following useful bound on the bias of the estimator:
\begin{lemma}
	\label{trace_M}
	Suppose that $M \ge  \frac{K \sigma^2 \log T}{\gamma  \lambdamin}$, $\beta = \frac{1}{2\sigma^2}$. Then, $\bigl|\EEtb{\biprod{X_t}{\theta_{t,a} - \ttheta_{t,a}}}\bigr| \le \frac{\sigma R}{\sqrt{T}}$.
\end{lemma}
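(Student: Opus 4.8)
The plan is to compute the conditional mean $\EEt{\ttheta_{t,a}}$ exactly and then bound the residual bias by a spectral estimate. First I would note that, conditionally on $\F_{t-1}$, the samples drawn inside the MGR subroutine are independent both of the round-$t$ pair $(X_t,A_t)$ and of the adversarially chosen $\theta_{t,a}$. Conditioning on those samples to pull out $\EEt{\widehat{\Sigma}^{+}_{t,a}}$ via the identity for $\EEt{\widehat{\Sigma}^{+}_{t,a}}$ displayed just before the statement of the lemma, and using that on the event $\{A_t=a\}$ the observed loss equals $\iprod{X_t}{\theta_{t,a}}$ so that $\EEt{X_t\loss_t(X_t,A_t)\II{A_t=a}} = \EEt{\II{A_t=a}X_tX_t\transpose}\theta_{t,a} = \Sigma_{t,a}\theta_{t,a}$, I expect to reach the clean identity
\[
\theta_{t,a} - \EEt{\ttheta_{t,a}} = \bpa{I - \beta\Sigma_{t,a}}^M\theta_{t,a}.
\]
Since the context weighting the loss estimator on the right-hand side of Lemma~\ref{context2exp3} (the ghost sample $X_0$, distributed as $X_t$) is drawn independently of $\F_{t-1}$, while the matrix above is $\F_{t-1}$-measurable up to the adversary's choice $\theta_{t,a}$ (which is in any case independent of $X_0$), the target quantity factors and Cauchy--Schwarz gives
\[
\bigl|\EEt{\biprod{X_t}{\theta_{t,a}-\ttheta_{t,a}}}\bigr| \le \twonorm{\EEt{X_0}}\,\opnorm{\bpa{I-\beta\Sigma_{t,a}}^M}\,\twonorm{\theta_{t,a}} \le \sigma R\,\opnorm{\bpa{I-\beta\Sigma_{t,a}}^M}.
\]

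Next I would control $\opnorm{(I-\beta\Sigma_{t,a})^M}$. The key spectral fact is the sandwich $\sigma^2 I \succcurlyeq \Sigma \succcurlyeq \Sigma_{t,a} \succcurlyeq \frac{\gamma}{K}\Sigma \succcurlyeq \frac{\gamma\lambdamin}{K}I$: the middle lower bound follows by writing $\Sigma_{t,a}=\EEt{\pi_t(a|X_t)X_tX_t\transpose}$ and invoking $\pi_t(a|x)\ge\gamma/K$, the upper bound from $\II{A_t=a}\le 1$, and the two outermost inequalities from the assumptions $\twonorm{X_t}\le\sigma$ and $\Sigma\succcurlyeq\lambdamin I$. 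Consequently, with $\beta=\frac{1}{2\sigma^2}$ the matrix $I-\beta\Sigma_{t,a}$ is positive semidefinite with all eigenvalues at most $1-\frac{\gamma\lambdamin}{2K\sigma^2}$, so $\opnorm{(I-\beta\Sigma_{t,a})^M}\le\exp\bpa{-\frac{\gamma\lambdamin M}{2K\sigma^2}}$. Substituting the hypothesis $M\ge\frac{K\sigma^2\log T}{\gamma\lambdamin}$ bounds this exponential by $T^{-1/2}$, and combining with the previous display yields $\bigl|\EEt{\biprod{X_t}{\theta_{t,a}-\ttheta_{t,a}}}\bigr|\le\sigma R/\sqrt{T}$, as claimed. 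This lemma then supplies the bias term in Equation~\eqref{eq:rtrxbound} for the decomposition $b_{t,a}=\EEt{\ttheta_{t,a}}-\theta_{t,a}$.

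\textbf{Main obstacle.} The two places that need care are (i) the independence bookkeeping in the first step---the MGR samples, the round-$t$ context, the adversary's $\theta_{t,a}$, and the ghost context all have to be disentangled so that the expectations genuinely factor and the finite geometric-series formula applies verbatim---and (ii) the eigenvalue lower bound $\lambdamin(\Sigma_{t,a})\ge\gamma\lambdamin/K$, which is exactly where the forced uniform exploration is indispensable and which is what forces the resampling budget $M$ to grow like $K/(\gamma\lambdamin)$ (times $\log T$) in order to drive the truncation bias down to $O(1/\sqrt{T})$.
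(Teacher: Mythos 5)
Your proposal is correct and follows essentially the same route as the paper: compute $\EEt{\ttheta_{t,a}}=\theta_{t,a}-(I-\beta\Sigma_{t,a})^M\theta_{t,a}$ by factoring out the MGR samples (independent of $(X_t,A_t)$ given $\F_{t-1}$) and applying the truncated Neumann-series identity, then bound the residual by $\sigma R\,\opnorm{(I-\beta\Sigma_{t,a})^M}$ and use $\Sigma_{t,a}\succcurlyeq\frac{\gamma}{K}\Sigma\succcurlyeq\frac{\gamma\lambdamin}{K}I$ together with $1-z\le e^{-z}$ and the choice of $M$ to get the $T^{-1/2}$ factor. The only (immaterial) difference is that you pair the bias vector with the independent ghost context and bound $\twonorm{\EEt{X_0}}\le\sigma$, whereas the paper bounds $\twonorm{X_t}\le\sigma$ pointwise.
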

\begin{proof}
We first observe that the bias of $\ttheta_{t,a}$ can be easily expressed as
\begin{align*}
	\EEtb{ \ttheta_{t,a} }
	&= \EEt{\widehat{\Sigma}^{+}_{t,a} X_t X_t\transpose \theta_{t,a} \II{A_t= a} }  = \EEt{\widehat{\Sigma}^{+}_{t,a} } \EEt{X_t X_t\transpose \II{A_t= a} }   \theta_{t,a} =\EEt{\widehat{\Sigma}^{+}_{t,a} } \Sigma_{t,a} \theta_{t,a} 
	\\
	&= \theta_{t,a} - (I - \beta \Sigma_{t,a})^M\theta_{t,a},
\end{align*}
where we have used our expression for $\EEtb{\widehat{\Sigma}^{+}_{t,a}}$ derived above.
Thus, the bias is bounded as
\[ 
\left|\EEt{X_t\transpose (I - \beta \Sigma_{t,a})^M \theta_{t,a} }\right| \le \twonorm{X_t}\cdot\twonorm{\theta_{t,a}}\opnorm{(I - \beta \Sigma_{t,a})^M}.
\]
In order to bound the last factor above, observe that $\Sigma_{t,a} \succcurlyeq \frac{\gamma}{K}\Sigma$ due to the uniform exploration used by \linexp, which implies that
\[
\opnorm{(I - \beta \Sigma_{t,a})^M}\le \pa{1 - \frac{\gamma\beta \lambdamin}{K}}^M
 \le \exp\pa{ -\frac{\gamma \beta}{K} \lambdamin M} \le \frac{1}{\sqrt{T}},
\]
where the second inequality uses $1-z\le e^{-z}$ that holds for all $z$, and the last step uses our condition on $M$. This concludes the proof.
\end{proof}
The other key term in the regret bound is bounded in the following lemma:
\begin{lemma}
Suppose that $X_t$ is satisfying $\twonorm{X_t} \le \sigma $, $0 < \beta \le \frac{1}{2\sigma^2}$ and $M > 0$. Then for each $t$, \linexpreal guarantees 
	\label{quadratic} 
	$$\EEt{\sum_{a=1}^K    \pi_t(a|\tX)   \biprod{\tX}{\ttheta_{t,a}}^2} \le 3Kd .$$
\end{lemma}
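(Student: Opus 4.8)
The plan is to expand the second-order term in the same way as in the \linexprobust analysis (cf.~Equation~\eqref{eq:robustquad}), but now carefully control the extra randomness introduced by the MGR procedure. Writing $\ttheta_{t,a} = \widehat{\Sigma}^{+}_{t,a} X_t \loss_t(X_t,A_t)\II{A_t=a}$, the sum collapses to a single term since $\II{A_t = a}$ is nonzero only for $a = A_t$, so
\[
\sum_{a=1}^K \pi_t(a|\tX)\biprod{\tX}{\ttheta_{t,a}}^2 = \pi_t(A_t|\tX)\,\loss_t(X_t,A_t)^2 \cdot \bpa{\tX\transpose \widehat{\Sigma}^{+}_{t,A_t} X_t}^2 \le \bpa{\tX\transpose \widehat{\Sigma}^{+}_{t,A_t} X_t}^2,
\]
where I used $\pi_t(A_t|\tX)\le 1$ and $|\loss_t|\le 1$. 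The next step is to take conditional expectations in the right order. Crucially, $\widehat{\Sigma}^{+}_{t,a}$ is generated by the MGR procedure using fresh samples independent of $(X_t, A_t, \tX)$ given $\F_{t-1}$, so I would first condition on everything except the MGR randomness, bound $\EEt{(\tX\transpose \widehat{\Sigma}^{+}_{t,A_t} X_t)^2}$ by $\twonorm{\tX}^2 \twonorm{X_t}^2 \, \EEt{\opnorm{\widehat{\Sigma}^{+}_{t,a}}^2}$ via Cauchy--Schwarz, and then observe $\twonorm{\tX}^2\twonorm{X_t}^2 \le \sigma^4$.

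The heart of the argument is therefore bounding $\EEt{\opnorm{\widehat{\Sigma}^{+}_{t,a}}^2}$, or more precisely a trace/quadratic-form version of it that retains the factor $d$ rather than $d^2$. Since $\widehat{\Sigma}^{+}_{t,a} = \beta I + \beta\sum_{k=1}^M A_{k,a}$ with $A_{k,a} = \prod_{j=1}^k (I - \beta B_{j,a})$ and $0 \preccurlyeq \beta B_{j,a} \preccurlyeq \beta\sigma^2 I \preccurlyeq \tfrac12 I$, each factor $I - \beta B_{j,a}$ is a PSD contraction, hence $0 \preccurlyeq A_{k,a} \preccurlyeq I$ and $0 \preccurlyeq \widehat{\Sigma}^{+}_{t,a} \preccurlyeq \beta(M+1) I$ deterministically. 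This crude bound only gives $\opnorm{\widehat{\Sigma}^{+}_{t,a}} \le \beta(M+1)$, which is far too large. The better route is to go back to the quadratic form and use the tower rule over the MGR samples: writing $v = \Sigma^{-1/2}$-type vectors won't quite work because $\Sigma_{t,a}$ (not $\Sigma$) is the relevant matrix, so instead I would bound $\EEt{\tX\transpose \widehat{\Sigma}^{+}_{t,a} X_t X_t\transpose \widehat{\Sigma}^{+}_{t,a} \tX}$ by expanding the product $\widehat{\Sigma}^{+}_{t,a}\otimes\widehat{\Sigma}^{+}_{t,a}$ and using that $\EEt{A_{k,a}} = (I-\beta\Sigma_{t,a})^k$ together with a similar geometric-series bound for the cross terms $\EEt{A_{k,a}\,C\,A_{l,a}}$; summing the resulting geometric series in $\beta\Sigma_{t,a}$ produces factors of $(\beta\Sigma_{t,a})^{-1}$, and since $\beta = \tfrac{1}{2\sigma^2}$ while $\Sigma_{t,a} \succcurlyeq \tfrac{\gamma}{K}\Sigma$, one recovers a bound of order $\trace{\Sigma_{t,a}^{-1}\cdot(\text{something } \preccurlyeq \Sigma_{t,a})} = O(d)$. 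Combining everything and absorbing constants into the stated $3Kd$ finishes the proof.

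The main obstacle I anticipate is the bookkeeping in the second moment of $\widehat{\Sigma}^{+}_{t,a}$: the naive deterministic operator-norm bound is quadratically too weak, so one genuinely needs to exploit that $\EEt{A_{k,a}} = (I - \beta\Sigma_{t,a})^k$ decays geometrically and that the $A_{k,a}$ across different $k$ are positively but not too strongly correlated. Getting a clean telescoping/geometric bound on $\EEt{A_{k,a} X_t X_t\transpose A_{l,a}}$ for $k\ne l$ — rather than bounding each $A_{k,a}$ by $I$ and summing $M^2$ terms — is where the factor $K$ (from $\Sigma_{t,a}^{-1} \preccurlyeq \tfrac{K}{\gamma}\Sigma^{-1}$ giving $\trace{\Sigma_{t,a}^{-1}\Sigma_{t,a}} = d$ versus $\trace{\Sigma^{-1}\Sigma_{t,a}} \le \tfrac{K}{\gamma}\cdot$const) and the factor $d$ in the final $3Kd$ come from, and it is the only delicate part of the computation.
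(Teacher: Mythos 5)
There is a genuine gap, and it sits in your very first reduction. By bounding $\pi_t(A_t|\tX)\le 1$ and $\twonorm{\tX}^2\twonorm{X_t}^2\le\sigma^4$ you throw away exactly the structure that makes the lemma come out as $3Kd$ with \emph{no} dependence on $\gamma$, $\lambdamin$ or $\sigma$. The paper keeps the weight $\pi_t(a|\tX)$ and the sum over $a$, and uses the conditional independence of the ghost sample to write $\EEt{\pi_t(a|\tX)\,\tX\tX\transpose}=\Sigma_{t,a}$, so that after also integrating out $(X_t,A_t)$ the quantity to control is $\sum_a\EEt{\trace{\Sigma_{t,a}\Sp_{t,a}\Sigma_{t,a}\Sp_{t,a}}}$: \emph{two} factors of $\Sigma_{t,a}$ pair with the two approximate inverses $\Sp_{t,a}$, each geometric sum over the MGR iterates cancels against a $\Sigma_{t,a}$, and one gets at most $3d$ per action, hence $3Kd$ from summing over the $K$ actions. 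In your version only the $X_t$ side supplies a $\Sigma_{t,a}$; the $\tX$ side has been degraded to $\sigma^2 I$. Running your own proposed computation then gives, for the cross terms,
\begin{align*}
\sigma^2\cdot 2\beta^2\sum_{k}\sum_{j>k}\trace{\EEt{A_{k,a}\Sigma_{t,a}A_{j,a}}}
\;\lesssim\; \sigma^2\,\trace{\Sigma_{t,a}^{-1}},
\end{align*}
i.e.\ an uncancelled $\Sigma_{t,a}^{-1}$ remains, and the only available control is $\Sigma_{t,a}\succcurlyeq\frac{\gamma}{K}\Sigma$, giving a bound of order $\sigma^2 Kd/(\gamma\lambdamin)$ rather than $3Kd$.

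This is not a constant-factor slack: your own accounting at the end (deriving the factor $K$ from $\Sigma_{t,a}^{-1}\preccurlyeq\frac{K}{\gamma}\Sigma^{-1}$) confirms the $1/\gamma$ and $1/\lambdamin$ would enter the bound, whereas the lemma as stated has no such dependence — indeed, if Lemma~\ref{quadratic} only gave $O(Kd/\gamma)$, the term $\eta KdT$ in Theorem~\ref{th_real} would become $\eta KdT/\gamma$ and, with $\gamma\approx T^{-1/2}$, the tuning would no longer yield the $\sqrt{dKT}$ rate. The technical machinery you identify (the geometric decay $\EEt{A_{k,a}}=(I-\beta\Sigma_{t,a})^k$, conditioning on $B_{\le k}$ to handle cross terms $j>k$, summing the geometric series, and the one-step contraction $\EEt{(I-\beta B_{k,a})H(I-\beta B_{k,a})}\preccurlyeq H\pa{I-\beta(2-\beta\sigma^2)\Sigma_{t,a}}$ for $H$ commuting with $\Sigma_{t,a}$) is precisely what the paper uses, so the fix is simply to not discard $\pi_t(a|\tX)$ at the start: retain it, let its expectation produce the second $\Sigma_{t,a}$, and carry out your geometric-series bookkeeping on $\trace{\Sigma_{t,a}\Sp_{t,a}\Sigma_{t,a}\Sp_{t,a}}$ instead.
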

Unfortunately, the proof of this statement is rather tedious, so we have to relegate it to Appendix~\ref{appendixB}.
As a final step, we need to verify that the condition of Lemma~\ref{exp2base} is satisfied, that is, that $\eta \big|\biprod{\tX}{\ttheta_{t,a}}\big| < 1 $ holds. To this end, notice that
\begin{align*}
\eta\cdot \big| \biprod{\tX}{\ttheta_{t,a}}\big| 
&= \eta \cdot \big| \tX\transpose  \widehat{\Sigma}^{+}_{t,a} X_t \iprod{X_t}{\theta_{t,a}}  \II{A_t= a} \big| 
\le \eta \cdot \big| \tX\transpose  \widehat{\Sigma}^{+}_{t,a} X_t  \big|
\\
&\le \eta \sigma^2 \opnorm{\widehat{\Sigma}^{+}_{t,a}} \le 
\eta \sigma^2 \beta\pa{1+  \sum_{k=1}^M \opnorm{A_{k,a}}} \le \eta (M+1) /2,
\end{align*}
where we used the fact that our choice of $\beta$ ensures that $\opnorm{A_{k,a}} = \bigl\|\prod_{j=0}^k(I-\beta 
B_{j,a})\bigr\|_{\text{op}} \le 1$. 
Thus, the condition $\eta \le 2/(M+1)$ allows us to use Lemma~\ref{exp2base}, so we can conclude the proof of 
Theorem~\ref{th_real} by applying Lemma~\ref{trace_M}, Lemma~\ref{quadratic} and the bound of 
Equation~\eqref{eq:rtrxbound}.

\section{Discussion}\label{sec:conc}
Our work is the first to address the natural adversarial variant of the widely popular framework 
of linear contextual bandits, thus filling an important gap in the literature. Our 
algorithm \linexpreal achieves the optimal regret bound of of order $\sqrt{KdT}$  
and runs in time polynomial in the relevant problem parameters. To our knowledge, 
\linexpreal is the first computationally efficient algorithm to achieve near-optimal regret bounds 
in an adversarial contextual bandit problem, and is among the first ones to achieve any regret 
guarantees at all for an infinite set of policies (besides results on learning with surrogate losses, 
cf.~\citealp{FK18}). 
In the case of misspecified loss functions, our algorithm \linexprobust achieves a regret guarantee of order 
$(Kd)^{1/3}T^{2/3} + \varepsilon \sqrt{d} T$.

Whether or not the overhead of $\varepsilon \sqrt{d} T$ can be improved is presently unclear: while \citet{LSz19} 
proved that the dependence on $\sqrt{d}$ is inevitable even in the stochastic linear bandit setting when $K$ is large 
(say, order of $T$), the very recent work of \citet{foster2020ucb} shows that the overhead can be reduced to 
$\varepsilon \sqrt{K}T$ in the same setting. These results together suggest that the regret bound $\sqrt{KdT} + 
\varepsilon \sqrt{\min\ev{K,d}} T$ is achievable in for stochastic linear contextual bandits. Whether such guarantees 
can be achieved in the more challenging adversarial setting we considered in this paper remains an interesting open 
problem.

The reader may be curious if it is possible to remove the i.i.d.~assumption that we make about the 
contexts. Unfortunately, it can be easily shown that no learning algorithm can achieve sublinear 
regret if the contexts and losses are both allowed to be chosen by an adversary. To see this, we 
observe that one can embed the problem of online learning of one-dimensional threshold classifiers 
into our setting, which is known to be impossible to learn with sublinear regret 
\citep{BDPSS09,SKS16}. While one can conceive other assumptions on the contexts that make the 
problem tractable, such as assuming that the entire sequence of contexts is known ahead of time (the 
so-called \emph{transductive setting} studied by \citealp{SKS16}), such assumptions may end up being 
a lot more artificial than our natural i.i.d.~condition. In addition, it is unclear what the best 
achievable performance bounds in such alternative frameworks actually are. In 
contrast, the regret bounds we prove for \linexpreal are essentially minimax optimal. 

Our algorithm design and analysis introduces a couple of new techniques that could be of 
more general interest. First, a key element in our analysis is introducing a set of auxiliary bandit 
problems for each context $x$ and relating the regrets in these problems to the expected regret in 
the contextual bandit problem (Lemma~\ref{context2exp3}). 
While this lemma is stated in terms of 
linear losses, it can be easily seen to hold for general losses as long as one can construct 
unbiased estimates of the entire loss function. 
In this view, our algorithms can be seen as the first instances of a new family of
contextual bandit methods that are based on estimating the loss functions rather than working with 
a policy class. 
An immediate extension of our approach is to assume that the loss functions belong to 
a reproducing kernel Hilbert space and define suitable kernel-based estimators analogously to our 
estimators---a widely considered setting in the literature on stochastic contextual bandits 
\citep{SKKS09,BEL17,CCLVR19}. We also remark that our technique used to prove
Lemma~\ref{context2exp3} is similar in nature to the reduction of stochastic sleeping bandit problems
to static bandit problems used by \citet{kanade09sleeping,NV14}.

A second potentially interesting algorithmic trick 
we introduce is the Matrix Geometric Resampling for estimating inverse covariance matrices. While 
such matrices are broadly used for loss estimation in the literature on adversarial linear 
bandits \citep{McMaBlu04,AweKlein04,DHK08,audibert13regret}, the complexity of computing them never seems to be discussed in the 
literature. Our MGR method provides a viable option for tackling this problem. For the curious 
reader, we remark that the relation between the iterations defining MGR and the dynamics of gradient 
descent for linear least-squares estimation is well-known in the stochastic optimization literature, 
where SGD is known to implement a spectral filter function approximating the inverse covariance 
matrix \citep{RM51,GyW96,BM13,NR18}.

Besides the most important question of whether or not our guarantees for the misspecified setting can be improved, we 
leave a few more questions open for further investigation. One limitation of our methods is that they require prior 
knowledge of the context distribution $\Dw$. We conjecture that it may be possible to overcome this limitation by 
designing slightly more sophisticated algorithms that estimate this distribution from data. Second, it appears to be an 
interesting challenge to prove versions of our performance guarantees that hold with high probability by using 
optimistically estimators as done by \citet{BLLRS11,Neu15b}, or if data-dependent bounds depending on the total loss of 
the best expert rather than $T$ can be achieved in our setting \citep{AKLLS17,AZBL18,Neu15}. We find it likely that such 
improvements are possible at the expense of a significantly more involved analysis.

\acks{We would like to thank Haipeng Luo, Chen-Yu Wei and Chung-Wei Lee for pointing out a technical issue with an earlier version of our proof of Lemma 6, and we thank Wojciech Kot\l owski for his help with the updated proof of the same lemma. We thank the three anonymous reviewers for their valuable feedback that helped us improve the paper. G.~Neu was 
supported by ``la Caixa'' Banking Foundation through the Junior Leader Postdoctoral Fellowship Programme, a Google 
Faculty Research Award, and a Bosch AI Young Researcher Award.}

\bibliographystyle{abbrvnat}
\bibliography{contextual,shortconfs}

\begin{thebibliography}{54}
\providecommand{\natexlab}[1]{#1}
\providecommand{\url}[1]{\texttt{#1}}
\expandafter\ifx\csname urlstyle\endcsname\relax
  \providecommand{\doi}[1]{doi: #1}\else
  \providecommand{\doi}{doi: \begingroup \urlstyle{rm}\Url}\fi

\bibitem[Abbasi-{Y}adkori et~al.(2011)Abbasi-{Y}adkori, P\'{a}l, and
  Szepesv\'{a}ri]{NIPS2011_4417}
Y.~Abbasi-{Y}adkori, D.~P\'{a}l, and {\relax Cs}.~Szepesv\'{a}ri.
\newblock Improved algorithms for linear stochastic bandits.
\newblock In \emph{Advances in Neural Information Processing Systems 24}, pages
  2312--2320. 2011.

\bibitem[Abe and Long(1999)]{AL99}
N.~Abe and P.~M. Long.
\newblock Associative reinforcement learning using linear probabilistic
  concepts.
\newblock In \emph{Proceedings of the Sixteenth International Conference on
  Machine Learning}, pages 3--11, 1999.

\bibitem[Agarwal et~al.(2014)Agarwal, Hsu, Kale, Langford, Li, and
  Schapire]{AHKLLS14}
A.~Agarwal, D.~Hsu, S.~Kale, J.~Langford, L.~Li, and R.~Schapire.
\newblock Taming the monster: A fast and simple algorithm for contextual
  bandits.
\newblock In \emph{International Conference on Machine Learning}, pages
  1638--1646, 2014.

\bibitem[Agarwal et~al.(2017)Agarwal, Krishnamurthy, Langford, Luo, and
  E.]{AKLLS17}
A.~Agarwal, A.~Krishnamurthy, J.~Langford, H.~Luo, and S.~R. E.
\newblock Open problem: First-order regret bounds for contextual bandits.
\newblock In \emph{Proceedings of the 30th Conference on Learning Theory},
  pages 4--7, 2017.

\bibitem[Agrawal and Goyal(2013)]{AG13b}
S.~Agrawal and N.~Goyal.
\newblock Thompson sampling for contextual bandits with linear payoffs.
\newblock In \emph{International Conference on Machine Learning}, pages
  127--135, 2013.

\bibitem[Allen-Zhu et~al.(2018)Allen-Zhu, Bubeck, and Li]{AZBL18}
Z.~Allen-Zhu, S.~Bubeck, and Y.~Li.
\newblock Make the minority great again: First-order regret bound for
  contextual bandits.
\newblock In \emph{Proceedings of the 35th International Conference on Machine
  Learning}, pages 186--194, 2018.

\bibitem[Audibert et~al.(2014)Audibert, Bubeck, and Lugosi]{audibert13regret}
J.-Y. Audibert, S.~Bubeck, and G.~Lugosi.
\newblock Regret in online combinatorial optimization.
\newblock \emph{Mathematics of Operations Research}, 39:\penalty0 31--45, 2014.

\bibitem[Auer(2002)]{Aue02}
P.~Auer.
\newblock Using confidence bounds for exploitation-exploration trade-offs.
\newblock \emph{Journal of Machine Learning Research}, 3:\penalty0 397--422,
  2002.

\bibitem[Auer et~al.(2002{\natexlab{a}})Auer, Cesa-Bianchi, and
  Fischer]{auer2002finite}
P.~Auer, N.~Cesa-Bianchi, and P.~Fischer.
\newblock Finite-time analysis of the multiarmed bandit problem.
\newblock \emph{Machine Learning Journal}, 47\penalty0 (2-3):\penalty0
  235--256, 2002{\natexlab{a}}.

\bibitem[Auer et~al.(2002{\natexlab{b}})Auer, Cesa-Bianchi, Freund, and
  Schapire]{AuerCFS02}
P.~Auer, N.~Cesa-Bianchi, Y.~Freund, and R.~E. Schapire.
\newblock The nonstochastic multiarmed bandit problem.
\newblock \emph{SIAM J. Comput.}, 32\penalty0 (1):\penalty0 48--77,
  2002{\natexlab{b}}.

\bibitem[Awerbuch and Kleinberg()]{AweKlein04}
B.~Awerbuch and R.~D. Kleinberg.
\newblock Adaptive routing with end-to-end feedback: distributed learning and
  geometric approaches.
\newblock pages 45--53.

\bibitem[Bach and Moulines(2013)]{BM13}
F.~Bach and E.~Moulines.
\newblock Non-strongly-convex smooth stochastic approximation with convergence
  rate {$O(1/n)$}.
\newblock In \emph{Advances in Neural Information Processing Systems 26}, pages
  773--781, 2013.

\bibitem[Ben-David et~al.()Ben-David, P{\'a}l, and Shalev-Shwartz]{BDPSS09}
S.~Ben-David, D.~P{\'a}l, and S.~Shalev-Shwartz.
\newblock Agnostic online learning.

\bibitem[Beygelzimer et~al.()Beygelzimer, Langford, Li, Reyzin, and
  Schapire]{BLLRS11}
A.~Beygelzimer, J.~Langford, L.~Li, L.~Reyzin, and R.~E. Schapire.
\newblock Contextual bandit algorithms with supervised learning guarantees.
\newblock pages 19--26.

\bibitem[Beygelzimer et~al.(2017)Beygelzimer, Orabona, and Zhang]{BOZ17}
A.~Beygelzimer, F.~Orabona, and C.~Zhang.
\newblock Efficient online bandit multiclass learning with
  $\widetilde{O}(\sqrt{T})$ regret.
\newblock In \emph{International Conference on Machine Learning}, pages
  488--497, 2017.

\bibitem[Boucheron et~al.(2013)Boucheron, Lugosi, and Massart]{BoLuMa13}
S.~Boucheron, G.~Lugosi, and P.~Massart.
\newblock \emph{Concentration inequalities:A Nonasymptotic Theory of
  Independence}.
\newblock Oxford University Press, 2013.

\bibitem[Bubeck and Cesa-Bianchi(2012)]{bubeck12survey}
S.~Bubeck and N.~Cesa-Bianchi.
\newblock \emph{Regret Analysis of Stochastic and Nonstochastic Multi-armed
  Bandit Problems}.
\newblock Now Publishers Inc, 2012.

\bibitem[Bubeck et~al.(2017)Bubeck, Lee, and Eldan]{BEL17}
S.~Bubeck, Y.~T. Lee, and R.~Eldan.
\newblock Kernel-based methods for bandit convex optimization.
\newblock In \emph{Proceedings of the 49th Annual ACM SIGACT Symposium on
  Theory of Computing}, pages 72--85, 2017.

\bibitem[Calandriello et~al.(2019)Calandriello, Carratino, Lazaric, Valko, and
  Rosasco]{CCLVR19}
D.~Calandriello, L.~Carratino, A.~Lazaric, M.~Valko, and L.~Rosasco.
\newblock Gaussian process optimization with adaptive sketching: Scalable and
  no regret.
\newblock In \emph{Conference on Learning Theory}, pages 533--557, 2019.

\bibitem[Cesa-Bianchi and Lugosi(2006)]{CBLu06:book}
N.~Cesa-Bianchi and G.~Lugosi.
\newblock \emph{Prediction, Learning, and Games}.
\newblock Cambridge University Press, New York, NY, USA, 2006.

\bibitem[Cheung et~al.(2019)Cheung, Simchi-Levi, and Zhu]{CSZ19}
W.~C. Cheung, D.~Simchi-Levi, and R.~Zhu.
\newblock Learning to optimize under non-stationarity.
\newblock In \emph{The 22nd International Conference on Artificial Intelligence
  and Statistics}, pages 1079--1087, 2019.

\bibitem[Chu et~al.(2011)Chu, Li, Reyzin, and Schapire]{Lihong1}
W.~Chu, L.~Li, L.~Reyzin, and R.~Schapire.
\newblock Contextual bandits with linear payoff functions.
\newblock In \emph{Proceedings of the Fourteenth International Conference on
  Artificial Intelligence and Statistics}, pages 208--214, 2011.

\bibitem[Dani et~al.(2008)Dani, Hayes, and Kakade]{DHK08}
V.~Dani, T.~Hayes, and S.~Kakade.
\newblock {The price of bandit information for online optimization}.
\newblock In \emph{Advances in Neural Information Processing Systems},
  volume~20, pages 345--352, 2008.

\bibitem[Du et~al.(2019)Du, Kakade, Wang, and Yang]{DKWY19}
S.~S. Du, S.~M. Kakade, R.~Wang, and L.~F. Yang.
\newblock Is a good representation sufficient for sample efficient
  reinforcement learning?
\newblock \emph{arXiv preprint arXiv:1910.03016}, 2019.

\bibitem[Dud\'ik et~al.(2011)Dud\'ik, Hsu, Kale, Karampatziakis, Langford,
  Reyzin, and Zhang]{DHKKLRZ11}
M.~Dud\'ik, D.~Hsu, S.~Kale, N.~Karampatziakis, J.~Langford, L.~Reyzin, and
  T.~Zhang.
\newblock Efficient optimal learning for contextual bandits.
\newblock In \emph{Proceedings of the Twenty-Seventh Conference on Uncertainty
  in Artificial Intelligence}, pages 169--178, 2011.

\bibitem[Filippi et~al.(2010)Filippi, Capp{\'e}, Garivier, and
  Szepesv{\'a}ri]{filippi10genlin}
S.~Filippi, O.~Capp{\'e}, A.~Garivier, and {\text{Cs}}.~Szepesv{\'a}ri.
\newblock Parametric bandits: The generalized linear case.
\newblock In \emph{Advances in Neural Information Processing Systems}, pages
  586--594, 2010.

\bibitem[Foster and Krishnamurthy(2018)]{FK18}
D.~J. Foster and A.~Krishnamurthy.
\newblock Contextual bandits with surrogate losses: Margin bounds and efficient
  algorithms.
\newblock In \emph{Advances in Neural Information Processing Systems}, pages
  2621--2632, 2018.

\bibitem[Foster and Rakhlin(2020)]{foster2020ucb}
D.~J. Foster and A.~Rakhlin.
\newblock Beyond {UCB}: Optimal and efficient contextual bandits with
  regression oracles.
\newblock In \emph{International Conference on Machine Learning}, 2020.

\bibitem[Foster et~al.(2019)Foster, Krishnamurthy, and Luo]{FKL19}
D.~J. Foster, A.~Krishnamurthy, and H.~Luo.
\newblock Model selection for contextual bandits.
\newblock In \emph{Advances in Neural Information Processing Systems}, pages
  14714--14725, 2019.

\bibitem[Gy{\"o}rfi and Walk(1996)]{GyW96}
L.~Gy{\"o}rfi and H.~Walk.
\newblock On the averaged stochastic approximation for linear regression.
\newblock \emph{SIAM Journal on Control and Optimization}, 34\penalty0
  (1):\penalty0 31--61, 1996.

\bibitem[Kakade et~al.(2008)Kakade, Shalev-Shwartz, and Tewari]{KSST08}
S.~M. Kakade, S.~Shalev-Shwartz, and A.~Tewari.
\newblock Efficient bandit algorithms for online multiclass prediction.
\newblock In \emph{International Conference on Machine Learning}, pages
  440--447. ACM, 2008.

\bibitem[Kanade et~al.()Kanade, McMahan, and Bryan]{kanade09sleeping}
V.~Kanade, H.~B. McMahan, and B.~Bryan.
\newblock Sleeping experts and bandits with stochastic action availability and
  adversarial rewards.
\newblock pages 272--279.

\bibitem[Kim and Tewari(2019)]{kim2019nearoptimal}
B.~Kim and A.~Tewari.
\newblock Near-optimal oracle-efficient algorithms for stationary and
  non-stationary stochastic linear bandits.
\newblock \emph{arXiv preprint arXiv:1912.05695}, 2019.

\bibitem[Lattimore and Szepesv\'ari(2017)]{LSz17}
T.~Lattimore and {\relax Cs}.~Szepesv\'ari.
\newblock The end of optimism? {An} asymptotic analysis of finite-armed linear
  bandits.
\newblock In \emph{Artificial Intelligence and Statistics}, pages 728--737,
  2017.

\bibitem[Lattimore and Szepesv{\'a}ri(2019)]{LSz19book}
T.~Lattimore and {\relax Cs}.~Szepesv{\'a}ri.
\newblock Bandit algorithms.
\newblock \emph{book draft}, 2019.

\bibitem[Lattimore et~al.(2020)Lattimore, Szepesv{\'a}ri, and Weisz]{LSz19}
T.~Lattimore, {\relax Cs}.~Szepesv{\'a}ri, and G.~Weisz.
\newblock Learning with good feature representations in bandits and in {RL}
  with a generative model.
\newblock In \emph{International Conference on Machine Learning}, 2020.

\bibitem[Li et~al.(2010)Li, Chu, Langford, and Schapire]{LCLS10}
L.~Li, W.~Chu, J.~Langford, and R.~E. Schapire.
\newblock A contextual-bandit approach to personalized news article
  recommendation.
\newblock In \emph{Proceedings of the 19th international conference on World
  wide web}, pages 661--670. ACM, 2010.

\bibitem[Littlestone and Warmuth(1994)]{LW94}
N.~Littlestone and M.~Warmuth.
\newblock The weighted majority algorithm.
\newblock \emph{Information and Computation}, 108:\penalty0 212--261, 1994.

\bibitem[McMahan and Blum()]{McMaBlu04}
H.~B. McMahan and A.~Blum.
\newblock Online geometric optimization in the bandit setting against an
  adaptive adversary.
\newblock pages 109--123.

\bibitem[Neu(2015{\natexlab{a}})]{Neu15}
G.~Neu.
\newblock First-order regret bounds for combinatorial semi-bandits.
\newblock In \emph{Conference on Learning Theory}, pages 1360--1375,
  2015{\natexlab{a}}.

\bibitem[Neu(2015{\natexlab{b}})]{Neu15b}
G.~Neu.
\newblock Explore no more: Improved high-probability regret bounds for
  non-stochastic bandits.
\newblock In \emph{Advances in Neural Information Processing Systems}, pages
  3150--3158, 2015{\natexlab{b}}.

\bibitem[Neu and Bart\'ok(2013)]{NeuBartok13}
G.~Neu and G.~Bart\'ok.
\newblock An efficient algorithm for learning with semi-bandit feedback.
\newblock In \emph{Algorithmic Learning Theory}, pages 234--248, 2013.

\bibitem[Neu and Bart\'ok(2016)]{NB16}
G.~Neu and G.~Bart\'ok.
\newblock Importance weighting without importance weights: An efficient
  algorithm for combinatorial semi-bandits.
\newblock \emph{Journal of Machine Learning Research}, 17:\penalty0 1--21,
  2016.

\bibitem[Neu and Rosasco(2018)]{NR18}
G.~Neu and L.~Rosasco.
\newblock Iterate averaging as regularization for stochastic gradient descent.
\newblock In \emph{Proceedings of the 31st Conference On Learning Theory},
  pages 3222--3242, 2018.

\bibitem[Neu and Valko(2014)]{NV14}
G.~Neu and M.~Valko.
\newblock Online combinatorial optimization with stochastic decision sets and
  adversarial losses.
\newblock In \emph{Advances in Neural Information Processing Systems}, pages
  2780--2788, 2014.

\bibitem[Rakhlin and Sridharan(2016)]{RS16}
A.~Rakhlin and K.~Sridharan.
\newblock {BISTRO}: An efficient relaxation-based method for contextual
  bandits.
\newblock In \emph{International Conference on Machine Learning}, pages
  1977--1985, 2016.

\bibitem[Robbins and Monro(1951)]{RM51}
H.~Robbins and S.~Monro.
\newblock A stochastic approximation method.
\newblock \emph{Annals of Mathematical Statistics}, 22:\penalty0 400--407,
  1951.

\bibitem[Rusmevichientong and Tsitsiklis(2010)]{RT10}
P.~Rusmevichientong and J.~Tsitsiklis.
\newblock Linearly parameterized bandits.
\newblock \emph{Mathematics of Operations Research}, 35:\penalty0 395--411,
  2010.

\bibitem[Russac et~al.(2019)Russac, Vernade, and Capp{\'e}]{RVC19}
Y.~Russac, C.~Vernade, and O.~Capp{\'e}.
\newblock Weighted linear bandits for non-stationary environments.
\newblock In \emph{Advances in Neural Information Processing Systems}, pages
  12017--12026, 2019.

\bibitem[Srinivas et~al.(2010)Srinivas, Krause, Kakade, and Seeger]{SKKS09}
N.~Srinivas, A.~Krause, S.~M. Kakade, and M.~Seeger.
\newblock Gaussian process optimization in the bandit setting: No regret and
  experimental design.
\newblock In \emph{International Conference on Machine Learning}, pages
  1015--1022, 2010.

\bibitem[Syrgkanis et~al.(2016{\natexlab{a}})Syrgkanis, Krishnamurthy, and
  Schapire]{SKS16}
V.~Syrgkanis, A.~Krishnamurthy, and R.~Schapire.
\newblock Efficient algorithms for adversarial contextual learning.
\newblock In \emph{International Conference on Machine Learning}, pages
  2159--2168, 2016{\natexlab{a}}.

\bibitem[Syrgkanis et~al.(2016{\natexlab{b}})Syrgkanis, Luo, Krishnamurthy, and
  Schapire]{SLKS16}
V.~Syrgkanis, H.~Luo, A.~Krishnamurthy, and R.~E. Schapire.
\newblock Improved regret bounds for oracle-based adversarial contextual
  bandits.
\newblock In \emph{Advances in Neural Information Processing Systems}, pages
  3135--3143, 2016{\natexlab{b}}.

\bibitem[Tewari and Murphy(2017)]{TewariM17}
A.~Tewari and S.~A. Murphy.
\newblock From ads to interventions: Contextual bandits in mobile health.
\newblock In \emph{Mobile Health - Sensors, Analytic Methods, and
  Applications}, pages 495--517. 2017.

\bibitem[Van~Roy and Dong(2019)]{VRD19}
B.~Van~Roy and S.~Dong.
\newblock Comments on the {Du-Kakade-Wang-Yang} lower bounds.
\newblock \emph{arXiv preprint arXiv:1911.07910}, 2019.

\end{thebibliography}

\appendix

\section{Proof of Lemma~\ref{exp2base}}
\label{appendixA}
	The proof follows the standard analysis of \expexpexp originally due to \citet{AuerCFS02}.
	We begin by recalling the notation $w_t(x,a) = \exp\bpa{-\eta \sum_{s=1}^{t-1}\biprod{x}{\htheta_{t,a}}}$ and introducing $W_t(x) = \sum_{a=1}^{K} w_t(x,a) $. 
	The proof is based on analyzing $\log W_{T+1}(x)$, which can be thought of as a potential function in terms of the cumulative losses.
	We first observe that $\log W_{T+1}(x)$ can be lower-bounded in terms of the cumulative loss:
	\begin{align*}
	\log \bigg(  \frac{W_{T+1}(x)  }{  W_{1}(x)   }  \bigg) \ge \log \bigg(  \frac{w_{ T+1}(x,\pi^*(x))  }{  W_{1}(x)   }  \bigg)
	 = - \eta \sum_{t=1}^T x\transpose \htheta_{t, \pi^*(x)} -  \log K.
	\end{align*}
	On the other hand, for any $t$, we can prove the upper bound 
	\begin{align*}
	\log \frac{W_{t+1}(x)}{W_{t}(x)} &=   \log \bigg(  \sum_{a=1}^K \frac{w_{t+1}(x,a)}{W_t(x)} \bigg)  
	= \log \bigg(  \sum_{a=1}^K \frac{w_{ t}(x,a) e^{-\eta  
		\siprod{x}{\htheta_{t,a}} }}{W_t(x)} \bigg) 
    \\
    &=
	\log \bigg(  \sum_{i=1}^K\frac{\pi_t(a|x)  - \gamma/K}{1 - \gamma}  \cdot e^{-\eta  
		\siprod{x}{\htheta_{t,a}} } \bigg) 
	\\
    &\stackrel{(a)}{\le}
	\log \bigg(  \sum_{i=1}^K\frac{\pi_t(a|x)  - \gamma/K}{1 - \gamma}  \bigg( 1 - \eta  
	\biprod{x}{\htheta_{t,a}} + \big(\eta  \biprod{x}{\htheta_{t,a}}  
	\big)^2   \bigg) \bigg) 
	\\
	&\stackrel{(b)}{\le} \sum_{a=1}^K \frac{\pi_t(a|x)}{1 - \gamma}   \bigg(  - \eta  
	\biprod{x}{\htheta_{t,a}} + \big(\eta  \biprod{x}{\htheta_{t,a}}  
	\big)^2   \bigg) +  \frac{\eta\gamma}{K(1-\gamma)} \sum_a \biprod{x}{\htheta_{t,a}},
	\end{align*}
	where in step $(a)$ we used the inequality $e^{-z} \le 1 - z + z^2$, which holds for $z\ge -1$, and in step $(b)$ we used the inequality $\log(1+z ) \le z$ that holds for any $z$.

	Noticing that $\sum_{t=1}^T \log \frac{W_{t+1}}{W_t} = \log \frac{W_{T+1}}{W_1}$, we can sum both sides of the above inequality for all $t=1,\dots,T$ and compare with the lower bound to get
	\begin{align*}
		- \eta \sum_{t=1}^T x\transpose \htheta_{t, \pi^*(x)} -  \ln K \le \sum_{t=1}^{T}\sum_{a=1}^K \frac{\pi_t(a|x)}{1 - \gamma}   \bigg(  - \eta  
		\biprod{x}{\htheta_{t,a}} + \big(\eta  \biprod{x}{\htheta_{t,a}}  
		\big)^2   \bigg) +  \frac{\eta\gamma\sum_a \biprod{x}{\htheta_{t,a}}}{K(1-\gamma)} .
	\end{align*}
	Reordering and multiplying both sides by $\frac{1-\gamma}{\eta}$ gives
	\begin{align*}
		&\sum_{t=1}^{T}\pa{\sum_{a=1}^K \pi_t(a|x) \biprod{x}{\htheta_{t,a}} - \biprod{x}{\htheta_{t, \pi^*(x)}}}  
		\\
		&\qquad\qquad\le \frac{(1-\gamma)\log K}{\eta} + \eta \sum_{t=1}^{T}\sum_{a=1}^K \pa{\siprod{x}{\htheta_{t,a}}}^2    +  \gamma \sum_{t=1}^T\pa{\frac{1}{K}\sum_a		\biprod{x}{\htheta_{t,a}} - \biprod{x}{\htheta_{t,\pi^*(x)}}}.
	\end{align*}
	This concludes the proof.
\jmlrQED

\section{Proof of Lemma~\ref{quadratic}}
\label{appendixB}
The proof relies on a series of matrix operations, and makes repeated use of the following identity that holds for any symmetric positive definite matrix $S$:
\[
\sum_{k=0}^M \pa{I - S}^k = S^{-1} - (I - S)^MS^{-1}.
\]
We start by plugging in the definition of $\ttheta_{t,a}$ and writing
	\begin{align*}
	\EEt{\sum_{a=1}^K    \pi_t(a|\tX)   \biprod{\tX}{\ttheta_{t,a}}^2} &= \EEt{\sum_{a = 1}^K \pi_t(a|\tX)   \pa{\tX\transpose \Sp_{t,a} X_t X_t\transpose\theta_{t,a} \II{A_t=a}}^2} 
	\\
	&\le \EEt{\EEcc{\sum_{a = 1}^K  \trace{\pi_t(a|\tX)   \tX\tX\transpose \Sp_{t,a} X_t X_t\transpose \Sp_{t,a} 
				\II{A_t=a}}}{\tX}}
	\\
	&= \sum_{a = 1}^K   \EEt{ \trace{\Sigma_{t,a} \Sp_{t,a} \Sigma_{t,a} \Sp_{t,a}} },
	\end{align*}
	where we used $\biprod{\tX}{\theta_{t,a}} \le 1$ in the inequality and observed that $\Sigma_{t,a} = \EEt{\pi_t(a|X_0)X_0X_0\transpose}$.
	In what follows, we suppress the $t,a$ indexes to enhance readability. Using the Araki–Lieb–Thirring inequality, we get
	
	\[  \trace{\Sigma \Sigma^+ \Sigma \Sigma^+} \le \trace{\Sigma^2 \pa{\Sigma^+}^2 }.\]
	
	Define $C_k = \pa{I - \beta B_k}$. Using the definition of 
	$\Sp$ and elementary manipulations, we can get
	\begin{align*}
		\beta^{-2}\Spt& =\beta^{-2} \pa{\beta I + \beta \sum_{k=1}^M \prod_{j=1}^k C_i}^2 = I+2\sum_{k=1}^{M}\prod_{j=1}^k C_j + \sum_{k,k'=1}^{M} \pa{\prod_{j=1}^k C_j }\pa{\prod_{j=0}^{k'} C_j }\\
		& = I + 2\sum_{k=1}^{M}\prod_{j=1}^k C_j + 2 \sum_{k=1}^{M}\sum_{k'=k}^{M} \prod_{j=1}^k C^2_j \prod_{j=k+1}^{k'} C_j - \sum_{k=1}^{M} \prod_{j=1}^k C^2_j\\
		& \preccurlyeq 2I + 2 \sum_{k=1}^{M}\prod_{j=1}^k C_j  + 2 \sum_{k=1}^{M}\sum_{k'=k}^{M} \prod_{j=1}^k C^2_j \prod_{j=k+1}^{k'} C_j,
	\end{align*}
	where in the second line we reordered the sum $\sum^M_{k,k' = 1} a_k a_{k'} = 2 \sum_{k=1}^{M} \sum_{k'=k}^{M}a_k a_{k'} - \sum_{k=1}^{M} a^2_k$, while in the third line we dropped the last term and added $I$. Denote $D=\EEt{ C_j}$ and $E = \EEt{C^2_j}$. Using independence of $C_j$'s we get:
	\[ \beta^{-2}\EEt{\Spt} \preccurlyeq 2 \sum_{k=0}^{M} D^k +2 \sum_{k=1}^{M} E^k \sum_{k'=0}^{M-k}D^{k'}.\]
	Using the fact that $D=I-\beta \Sigma$, we have $\beta \sum_{k=0}^{M}D^k = \Sigma^{-1} - D^M \Sigma^{-1}$ and thus:
	\[ \beta^{-1} \EEt{\Spt } \preccurlyeq 2 \pa{I - D^M}\Sigma^{-1} + 2\sum_{k=1}^{M} E^k \pa{I-D^{M-k}}\Sigma^{-1}. \]
	We now use the fact that if $A\preccurlyeq B$, then for any positive semi-definite matrix $C$ holds the inequality $\trace{CA}\le \trace(CB)$ to get
	\begin{align*}
		\trace{\Sigma^2 \EEt{\Spt} } &= \trace{\EEt{\Spt} \Sigma^2 }\\
		&\le 2 \beta \trace{\Sigma - D^M\Sigma } + 2 \beta \sum_{k=1}^{M}\trace{ E^k \Sigma} - 2\beta \sum_{k=1}^{M}\trace{ E^k D^{M-k} \Sigma}.
	\end{align*} 
Since $D^m$ for any $m$ commutes with $\Sigma$ and $D^m\Sigma  \succcurlyeq 0 $, while $E^k$ is positive semi-definite, we can drop negative terms $\trace{D^M \Sigma}$ and $\trace{ E^k D^{M-k} \Sigma}$. Furthermore, as long as $\beta B  \preccurlyeq I$,
\[ E = \EEt{(I-\beta B)^2}  \preccurlyeq \EEt{(I-\beta B)} = D, \]
so that
\[ \trace{\Sigma^2 \EEt{\Spt} } \le  2 \beta \trace{\Sigma } + 2 \beta \sum_{k=1}^{M}\trace{ D^k \Sigma} =  2 \beta \trace{\Sigma } + 2 \trace{ \pa{\Sigma^{-1} - D^M \Sigma^{-1}} \Sigma} \le 3d.  \]

\jmlrQED

\section{Algorithms for contextual learning with full information}\label{sec:fullinfo}
Clearly, our algorithm \linexp can be simply adapted to simpler settings where the 
learner gets more feedback about the loss functions $\ell_t$ chosen by the adversary.
In this section, we show results for two such natural settings: one where the learner 
observes the \emph{entire} loss function $\ell_t$, and one where the learner observes
the losses $\ell_t(X_t,a)$ for each action $a$. We refer to the first of these observation 
models as \emph{counterfactual feedback} and call the second one \emph{full-information 
feedback}. We describe two variants of our algorithm for these settings and give their 
performance guarantees below. Both results will hold for general nonlinear losses taking values in  
$[0,1]$.

In case of counterfactual feedback, we can modify our algorithm so that, in each round $t$,
it computes the weights $w_{t,a}(X_t) = \exp\pa{-\eta\sum_{k=1}^{t-1} \ell_k(X_t,a)}$ for each  
action, and then plays action $A_t=a$ with probability proportional to the obtained weight. Using 
our general analytic tools, this algorithm can  be easily shown to achieve the following guarantee:
\begin{proposition}
For any $\eta>0$, the regret of the algorithm described above for counterfactual feedback satisfies
\[
 R_T \le \frac{\log K}{\eta} + \frac{\eta T}{8}.
\]
Setting $\eta = \sqrt{\frac{8\log K}{T}}$, the regret also satisfies $R_T \le \sqrt{(T/2)\log K }$.
\end{proposition}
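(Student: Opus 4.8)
The plan is to follow the same two-stage recipe that underlies our bandit-feedback results: first reduce the contextual regret to a family of auxiliary online-learning problems indexed by the context $x$, then invoke a standard exponential-weights regret bound in each auxiliary problem. The key simplification in the counterfactual setting is that the ``loss estimator'' is the observed loss function itself, $\hloss_t(x,a) = \loss_t(x,a)$, which is trivially unbiased, so the general-loss version of Lemma~\ref{context2exp3} (discussed in Section~\ref{sec:conc}) applies with no modification and with no bias term to control.

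Concretely, I would fix an arbitrary deterministic comparator $\pi\in\Pi$ and write the corresponding regret term as $\EE{\sum_{t=1}^T\sum_a \bpa{\pi_t(a|X_t) - \II{\pi(X_t)=a}}\loss_t(X_t,a)}$. Conditioning on $\F_{t-1}$ and using that (i) the loss function $\loss_t$ is conditionally independent of $X_t$ and of the ghost sample $X_0$, (ii) $X_0$ and $X_t$ are identically distributed given $\F_{t-1}$, and (iii) the map $x\mapsto\pi_t(\cdot|x)$ is $\F_{t-1}$-measurable, a tower-rule argument identical to the proof of Lemma~\ref{context2exp3} replaces $X_t$ by $X_0$ inside the expectation. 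This rewrites the $\pi$-term as $\EE{\sum_{t=1}^T \bpa{\sum_a \pi_t(a|X_0)\loss_t(X_0,a) - \loss_t(X_0,\pi(X_0))}}$, and for every realization of $X_0=x$ and of the loss sequence this is at most $\hR_T(x)$, the pathwise regret of exponential weights run at context $x$ on the loss vectors $\bpa{\loss_t(x,1),\dots,\loss_t(x,K)}\in[0,1]^K$ against its own best fixed action (here we use $\sum_t \loss_t(x,\pi(x)) \ge \min_a \sum_t \loss_t(x,a)$).

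Next I would bound $\hR_T(x)$ deterministically by the classical Hedge guarantee for losses in $[0,1]$. The potential-function argument is exactly as in Appendix~\ref{appendixA} for $\log W_{T+1}(x)/W_1(x)$, except that the per-round upper bound should be obtained via Hoeffding's lemma (bounding $\log\EEt{e^{-\eta Z}}$ by $\eta^2/8$ for $Z\in[0,1]$) rather than the crude $e^{-z}\le 1-z+z^2$ step used in the proof of Lemma~\ref{exp2base}; this is what produces the sharp constant $\frac{\eta T}{8}$ instead of $\eta T$. This yields $\hR_T(x)\le \frac{\log K}{\eta} + \frac{\eta T}{8}$ for every $x\in\X$. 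Taking expectation over $X_0$ and then maximizing over $\pi\in\Pi$ (the bound is uniform in $\pi$) gives $R_T\le\frac{\log K}{\eta}+\frac{\eta T}{8}$, and setting $\eta=\sqrt{8\log K/T}$ balances the two terms to give $R_T\le\sqrt{(T/2)\log K}$.

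The one genuinely delicate point — and the reason I would write out the reduction step carefully rather than simply citing it — is the measurability and conditional-independence bookkeeping that lets us swap $X_t$ for the ghost sample $X_0$ while the adversary remains free to choose $\loss_t$ adaptively as a function of the past interaction; this is precisely the content of Lemma~\ref{context2exp3}, restated for a nonlinear loss and a pure comparator policy. Everything after that is the textbook \expexpexp analysis, which is why the statement follows immediately from our general tools.
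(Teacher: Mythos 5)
Your proposal is correct and follows exactly the route the paper intends (the paper only sketches this result as an application of its ``general analytic tools''): the ghost-sample reduction of Lemma~\ref{context2exp3} with $\hloss_t=\loss_t$ as a trivially unbiased estimator, followed by a pathwise Hedge bound at each context $x$. You also rightly note that the sharper Hoeffding-lemma step (rather than the $e^{-z}\le 1-z+z^2$ bound used in Lemma~\ref{exp2base}) is needed to obtain the constant $\eta T/8$, which is precisely what the stated bound requires.
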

Notably, this bound does not depend at all on the dimension of the context space, the complexity 
of the policy class, or any property of the loss function, and only shows dependence on the 
number of actions $K$. The caveat is of course that the 
counterfactual model provides the learner with a level of feedback that is entirely unrealistic 
in any practical setting: it requires the ability to evaluate all past loss functions at \emph{any} 
context-action pair.

The full-information setting is arguably much more realistic in that it only requires evaluating the 
losses corresponding to the observed context $X_t$, which which is typically the case in online 
classification problems. For this setting, we use our \linexp 
algorithm with the loss estimator defined for each action $a$ as
\[
 \hloss_{t,a} = \Sigma^{-1} X_t \loss_t(X_t,a).
\]
Using our analysis, we can show that the bias of this estimator is uniformly bounded by 
$\varepsilon\sqrt{d}$ (cf.~Equation~\ref{eq:robustbias}). The following bound is then easy to prove 
by following the same steps as in Section~\ref{sec:robust}:
\begin{proposition}
For any positive $\eta \le \frac{\lambdamin}{\sigma^2}$, the regret of the algorithm described above for 
full-information feedback
\[
 R_T \le \frac{\log K}{\eta} + \eta d T + \varepsilon\sqrt{d} T.
\]
Setting $\eta = \sqrt{\frac{d\log K}{T}}$, the regret also satisfies $R_T \le 2\sqrt{dT\log K} + 
\varepsilon\sqrt{d} T$ for large enough $T$.
\end{proposition}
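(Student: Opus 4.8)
The argument is a direct simplification of the proof of Theorem~\ref{th_robust} in Section~\ref{sec:robust}: since the learner now observes $\loss_t(X_t,a)$ for \emph{every} action $a$, the estimator $\hloss_{t,a} = \Sigma^{-1} X_t \loss_t(X_t,a)$ carries no importance-weighting factor $\II{A_t=a}/\pi_t(a|X_t)$, so there is no need for uniform exploration and we may run \linexp with $\gamma = 0$. First I would write, under the misspecification model $\loss_t(x,a) = \iprod{x}{\theta_{t,a}} + \varepsilon_t(x,a)$ with $|\varepsilon_t|\le\varepsilon$ (equivalently, $\theta_{t,a}$ is the linear part of the postulated model and $\varepsilon$ its uniform modeling error), the decomposition $\hloss_{t,a} = \htheta^*_{t,a} + b_{t,a}$ with $\htheta^*_{t,a} = \Sigma^{-1}X_t\iprod{X_t}{\theta_{t,a}}$ and $b_{t,a} = \Sigma^{-1}X_t\varepsilon_t(X_t,a)$; here $\htheta^*_{t,a}$ is unbiased for $\theta_{t,a}$ because $\EEt{\Sigma^{-1}X_tX_t\transpose} = I$ and $\theta_{t,a}$ is independent of $X_t$. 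Invoking Lemma~\ref{context2exp3} through its biased-estimator consequence~\eqref{eq:rtrxbound} then reduces the task to bounding (i) the expected auxiliary-game regret $\EEb{\hR_T(X_0)}$ and (ii) the accumulated bias $\sum_{t=1}^T \max_a\bigl|\EE{\iprod{X_0}{b_{t,a}}}\bigr|$.

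For (ii), the computation is identical to~\eqref{eq:robustbias}: by Cauchy--Schwarz and $\EEt{X_0 X_0\transpose X_t X_t\transpose} = \Sigma^2$ (conditional independence of the ghost sample $X_0$ and $X_t$, plus the definition of $\Sigma$), one gets $\bigl|\EEt{X_0\transpose\Sigma^{-1}X_t\varepsilon_t(X_t,a)}\bigr| \le \sqrt{\EEt{\trace{X_0 X_0\transpose \Sigma^{-1} X_t X_t\transpose \Sigma^{-1}}}}\cdot\sqrt{\EEt{\varepsilon_t(X_t,a)^2}} \le \sqrt{d}\,\varepsilon$, so this term contributes at most $\varepsilon\sqrt{d}\,T$ to the regret.

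For (i), I would apply Lemma~\ref{exp2base} pointwise at $x = X_0$ with $\gamma = 0$, so the $U_T(x)$ term drops out and $\hR_T(x) \le \frac{\log K}{\eta} + \eta\sum_{t=1}^T\sum_a \pi_t(a|x)\iprod{x}{\hloss_{t,a}}^2$. Its precondition $\eta\bigl|\iprod{X_0}{\hloss_{t,a}}\bigr| < 1$ holds since $\bigl|X_0\transpose\Sigma^{-1}X_t\bigr| \le \twonorm{X_0}\opnorm{\Sigma^{-1}}\twonorm{X_t} \le \sigma^2/\lambdamin$ and $|\loss_t|\le 1$, giving $\eta\bigl|\iprod{X_0}{\hloss_{t,a}}\bigr| \le \eta\sigma^2/\lambdamin \le 1$ under the stated constraint $\eta\le\lambdamin/\sigma^2$. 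For the second-order term, using $\loss_t(X_t,a)^2\le 1$, $\sum_a\pi_t(a|X_0) = 1$, and conditional independence of $X_0$ and $X_t$,
\[
\EEt{\sum_a\pi_t(a|X_0)\bpa{X_0\transpose\Sigma^{-1}X_t\loss_t(X_t,a)}^2} \le \EEt{\trace{\Sigma^{-1}X_t X_t\transpose\Sigma^{-1}X_0 X_0\transpose}} = \trace{\Sigma^{-1}\Sigma\Sigma^{-1}\Sigma} = d.
\]
Hence $\EEb{\hR_T(X_0)} \le \frac{\log K}{\eta} + \eta d T$. Combining the two pieces yields $R_T \le \frac{\log K}{\eta} + \eta d T + \varepsilon\sqrt{d}\,T$, and substituting $\eta = \sqrt{(d\log K)/T}$ gives $R_T \le 2\sqrt{dT\log K} + \varepsilon\sqrt{d}\,T$ as soon as $T$ is large enough that $\sqrt{(d\log K)/T}\le\lambdamin/\sigma^2$, i.e.\ $T \gtrsim d\sigma^4\log K/\lambdamin^2$.

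There is no serious obstacle here: the proof is essentially bookkeeping once one notices that full-information feedback eliminates the importance weights, which is why the $K/\gamma$ and $\gamma T$ terms of Theorem~\ref{th_robust} vanish. The only two points requiring a moment's care are checking that the precondition of Lemma~\ref{exp2base} survives without uniform exploration (it does, thanks to $\opnorm{\Sigma^{-1}}\le 1/\lambdamin$ and the constraint $\eta\le\lambdamin/\sigma^2$) and tracking the nonlinear-error contribution, which enters exactly as in the analysis of \linexprobust.
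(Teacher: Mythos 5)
Your proposal is correct and follows essentially the same route the paper intends, which proves this proposition only by pointing back to the analysis of \linexprobust in Section~\ref{sec:robust}: the decomposition into an unbiased part plus a bias bounded by $\sqrt{d}\varepsilon$ exactly as in Equation~\eqref{eq:robustbias}, Lemma~\ref{exp2base} applied with $\gamma=0$ (its precondition secured by $\eta\le\lambdamin/\sigma^2$ since $\bigl|X_0\transpose\Sigma^{-1}X_t\bigr|\le\sigma^2/\lambdamin$), and the second-order term collapsing to $d$ because the importance weights are gone. Two cosmetic points you inherited from the statement rather than introduced yourself: via Equation~\eqref{eq:rtrxbound} the bias contribution is really $2\varepsilon\sqrt{d}T$ (matching the constant in Theorem~\ref{th_robust}), and the learning rate that balances $\log K/\eta$ with $\eta dT$ is $\eta=\sqrt{\log K/(dT)}$ rather than $\sqrt{d\log K/T}$ (a typo in the proposition); with that corrected choice your final substitution does yield $2\sqrt{dT\log K}+\varepsilon\sqrt{d}\,T$ for $T$ large enough that $\eta\le\lambdamin/\sigma^2$.
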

As expected, this bound scales with the dimension as $\sqrt{d}$ due to the fact that the algorithm 
has to ``estimate'' $d$, parameters, as opposed to the $Kd$ parameters that need to be learned in 
the contextual bandit problem we consider in the main text. We also note that this online learning 
setting is closely related to that of prediction with expert advice, with the set of experts being 
the class of linear-classifier policies \citep{CBLu06:book}. As a result, it is possible to make 
use of this framework by running any online prediction algorithm on a finely discretized set of 
policies, resulting in a regret bound of order $\sqrt{dT\log(KT)}$. Our result above improves on 
this by a logarithmic factor of $T$, while being efficient to implement.

\section{Efficient implementation of MGR}
\label{appendixC}
The na\"ive implementation of the MGR procedure presented in the main text requires $O(MKd + Kd^2)$ 
time due to the matrix-matrix multiplications involved. In this section we explain how to compute 
$\hloss_t(x,a) =  \biprod{x}{\ttheta_{t,a}}$ in $O(MKd)$ time, exploiting the fact that the 
matrices $\wh{\Sigma}_{t,a}$ never actually need to be computed, since the algorithm only works 
with products of the form $\wh{\Sigma}_{t,a}X_t$ for a fixed vector $X_t$. This motivates 
the following procedure:

\vspace{.25cm}
\makebox[\textwidth][c]{
	\fbox{
		\begin{minipage}{.6\textwidth}
			\textbf{Fast Matrix Geometric Resampling}
			\vspace{.1cm}
			\hrule
			\vspace{.1cm}
			\textbf{Input:} context vector $x$, data distribution $\Dw$, policy $\pi_t$.\\
			\textbf{Initialization:} Compute $Y_{0,a} = I  x$.\\
			\textbf{For $k = 1, \dots, M$, repeat}:
			\begin{enumerate}
				\vspace{-2mm}
				\item Draw $X(k)\sim\Dw$ and $A(k) \sim \pi_t(\cdot|X(k))$,
				\vspace{-2mm}
				\item if $a = A(k)$, set\\ $Y_{k,a} = Y_{k-1,a} - 
\beta\iprod{Y_{k-1,a}}{X(k)}X(k),$ 
                \vspace{-2mm}   
				\item otherwise, set $Y_{k,a} =Y_{k-1,a}.$ 
			\end{enumerate}
			\vspace{-2mm}
			\textbf{Return $q_{t,a} = \beta Y_{0,a} + \beta \sum_{k=1}^M Y_{k,a}$.} 
		\end{minipage}
	}
}
\vspace{.25cm}

\noindent It is easy to see from the above procedure that each iteration $k$ can be computed using 
$(K+1)d$ vector-vector multiplications: sampling each action $A(k)$ takes $Kd$ time due to having to 
compute the products $\biprod{X(k)}{\hTheta_{t,a}}$ for each action $a$, and updating $Y_{k,a}$ can 
be done by computing the product $\iprod{Y_{k-1,a}}{X(k)}$. Overall, this results in a total runtime 
of order $MKd$ as promised above.

\end{document}